\title{Star algorithm for NN ensembling}
\author{
  Sergey ~Zinchenko\\
  Novosibirsk State University\\
  Russian Federation\\
  \texttt{zinch.s.e@gmail.com} \\
   \And
  Dmitry ~Lishudi \\
  HSE University,\\
  Russian Federation\\
  \texttt{dlishudi@hse.ru} \\
}
\newcommand\fnurl[2]{%
  \href{#2}{#1}\footnote{\url{#2}}}%
\newtheorem{theorem}{Theorem}[section]
\newtheorem{definition}[theorem]{Definition}
\newtheorem{lemma}[theorem]{Lemma}
\newtheorem{corollary}[theorem]{Corollary}
\DeclareMathOperator*{\Prob}{\mathbb{P}}
\DeclareMathOperator*{\E}{\mathbb{E}}
\DeclareMathOperator*{\HE}{\widehat{\mathbb{E}}}
\DeclareMathOperator*{\argmin}{arg\,min}
\newcommand{\wave}{\widetilde}
\newcolumntype{P}[1]{>{\centering\arraybackslash}p{#1}}
\begin{document}

\maketitle

\begin{abstract}
    Neural network ensembling is a common and robust way to increase model efficiency. In this paper, we propose a new neural network ensemble algorithm based on Audibert's empirical star algorithm. We provide optimal theoretical minimax bound on the excess squared risk. Additionally, we empirically study this algorithm on regression and classification tasks and compare it to most popular ensembling methods.
\end{abstract}

\section{Introduction}

Deep learning has been successfully applied to many types of problems and has reached the state-of-the-art performance. Deep learning models have shown good results in regression analysis and time series forecasting \cite{qiu2014ensemble}, computer vision \cite{he2016deep}, as well as in natural language processing \cite{otter2020survey} and other areas. In many complex problems, such as the Imagenet competition \cite{deng2009imagenet}, the best results are achieved by ensembles of neural networks, that is, it is often useful to combine the predictions of multiple neural networks to create a new one. The easiest way to ensemble multiple neural networks is to average their predictions \cite{drucker1994boosting}. As shown in work \cite{kawaguchi2016deep}, the number of local minima grows exponentially with the number of parameters. And since modern neural network training methods are based on stochastic optimization, two identical architectures optimized with different initializations will probably converge to different solutions. Such a technique for obtaining neural networks with subsequent construction of an ensemble by majority voting or averaging is used, for example, in article \cite{caruana2004ensemble}.

In addition to the fact that the class of deep neural networks has a huge number of local minima, it is also non-convex. It was shown in work \cite{lecue2009aggregation} that for the procedure of minimizing the empirical risk in a non-convex class of functions, the order of convergence is not optimal. In fact, most modern neural network training methods do just that: they minimize the mean value of some error function on the training set. J.-Y. Audibert proposed the star procedure method, which has optimal rate of convergence of excess squared risk \cite{a07-mixture}. Motivated by this observation and the huge success of ensembles of neural networks, we propose a modification of the star procedure that will combine the advantages of both methods. In short, the procedure we propose can be described as follows: we run $d$ independent learning processes of neural networks, obtaining empirical risk minimizers $\widehat{g}_1,\, \dots,\,\widehat{g}_d$, freeze their weights, then we initialize a new model and connect all $d + 1$ models with a layer of convex coefficients, after that we start the process of optimizing all non-frozen parameters. This whole procedure can be viewed as a search for an empirical minimizer in all possible $d$-dimensional simplices spanned by $d$-minimizers and a class of neural networks.
As is known, the minimization of the empirical risk with respect to the convex hull is not optimal in the same way as with respect to the original class of functions. Our method, however, minimizes over some set intermediate between the original class of functions and its convex hull, allowing us to combine the advantages of model ensembling and the star procedure.

 One can look at this procedure as a new way to train one large neural network with a block architecture, as well as a new way of aggregating models. In this work, we carry out a theoretical analysis of the behavior of the proposed algorithm for solving the regression problem with a class of sparse neural networks, and also check the operation of the algorithm in numerical experiments on classification and regression problems.

In addition to this, we take into account that it is impossible to achieve a global minimum in the class of neural networks, and we consider the situation of imprecise minimization.

The \textit{main results} of our work can be formulated as follows:

\begin{itemize}
    \item A multidimensional modification of the star procedure is proposed.
    \item We prove that the resulting estimate satisfies the exact oracle inequality. It follows from this estimate that the order of convergence of the algorithm (in terms of sample size $n$) for a fixed neural network architecture is optimal. Our results improve over the imprecise oracle inequality in \cite{schmidt2020nonparametric}.
    \item We give an upper bound on the generalization error for the case of approximate empirical risk minimizers, which implies the stability of our algorithm against minimization errors.
    \item Based on our algorithm, we propose a new method for training block architecture neural networks, which is quite universal in terms of procedures. We also propose a new way to solve the aggregation problem.
    \item We illustrate the efficiency of our approach with numerical experiments on real-world datasets.
\end{itemize}

    The rest of this paper is organized as follows.
    
    In Section \ref{related work}, we make an overview of neural network ensembling methods and briefly discuss the advantages of the star algorithm.
    In Section \ref{main_part}, following the Schmidt-Hieber notation \cite{schmidt2020nonparametric}, we define a class of sparse fully connected neural networks and formulate a number of statements from which it follows that the algorithm we proposed has a fast rate of convergence.
    All proofs are attached in additional materials.
    In Section \ref{experiments}, we discuss the implementation of our algorithm, point out a number of possible problems, and suggest several modifications to fix them. It also describes the conditions for conducting numerical experiments and presents some of their results. 
    At the end, we offer two possible views on our procedure: a new way to train block neural networks and a fairly flexible model aggregation procedure.

{\section{Related work}
\label{related work}}
    \subsection{Ensemble strategies}
    The main idea of the ensemble is to train several predictors and build a good metamodel on them.
    There are many techniques for its construction. We present some of them. A more detailed review can be found in the work \cite{ensemble_review}.
    
    \textit{Bagging.}
    
    The first of two stages is the generation of several samples with the same distribution as the training one. The next stage is training multiple models and aggregate their predictions.
    There are cases when the predictions of the constructed models are transferred to another model as new features \cite{kim2002support}. But still, most often, aggregation is performed either by majority voting or by averaging. For example, Kyoungnam Ha et al. showed  that the ensemble method for perceptrons based on bagging works better than the individual neural network \cite{ha2005response}.\\
    
    \textit{Boosting}
    
    Another approach to construct ensembles is boosting. The idea is to build one strong model from several weak models by stepwise additive modeling. It was first applied to random trees to construct a so-called random forest. But it has also been applied to deep learning models as well. For example, in the task of recognizing facial expressions \cite{liu2014facial}, or to improve the predictions of convolutional neural networks \cite{moghimi2016boosted}.
    \clearpage
    
    \textit{Snapshots}
    
    The main problem in aggregation of deep learning models is the cost of training.
    Training even one modern model requires a lot of resources, and the ensemble needs a lot. A snapshot technique \cite{snapshots} and their modification \cite{FGE} have been created to combat this problem. In short, during the learning process, the step length in the gradient descent is cyclically changed. This allows a learner to get into various local minima (parameters of which are stored for subsequent aggregation) and, as a consequence, to build an ensemble using a computational budget comparable with the cost of training one model.
    
    Later, Zhang et al. tried combining this idea together with boosting in \cite{zhang2020snapshot}. The novelty of this work is that the difference between the models is achieved by re-weighting the objects from the training sample, the step length parameter is changed adaptively, and for aggregation, the best model from the entire training cycle is taken, and not the last one.
    
    \textit{Implicit ensembles}
    
    In this approach a single model is trained in such a way as to behave like an ensemble. But it requires a much smaller computational budget for training. This is achieved due to the fact that in implicit ensembles the parameters of the models are shared, and their averaging is returned as predictions. For example, an implicit ensemble is the \textit{DropOut} \cite{dropout} method or the \textit{DropConnect} \cite{wan2013regularization} method. During training, each neuron or connection in the neural network has a chance to collapse, and after training, a neural network is returned, the elements of which are weighted by the probabilities of the presence of each element.
    A similar idea is implemented in the \cite{huang2016deep} \textit{Stochastic depth} method for \cite{he2016deep} residual neural networks. There, the residual blocks are randomly discarded during training, and the transformation goes only through a skip connection.

    \subsection{Star algorithm}
    
    Unlike the ensemble problem, the aggregation problem focuses on building a good predictor in a situation where there are already several ready-made models. The reader is referred to \cite{nemirovski2000topics, tsybakov03} for different types of aggregation. It is important to mention that, in contrast to the two-stage star procedure of Audibert \cite{a07-mixture}, the usual empirical risk minimization procedure among the class of known predictors (or their convex hull) does not necessarily lead to the optimal rate of convergence \cite{lecue2009aggregation}. This result was further developed in \cite{liang2015learning}, where the authors extend the theoretical analysis of the star algorithm to the case of infinite classes using the offset Rademacher’s complexity technique.
    It was also shown in the \cite{vijaykumar2021localization} that these results can be generalized to other loss functions. In
    particular, this means that the star procedure can be applied to more than just regression problems.


\section{Theoretical results}
\label{main_part}
    In this chapter, we will formulate our theoretical results for solving the regression problem with a class of sparse fully connected neural networks using our proposed $Star_d$ procedure.

    We have a $S_n = (\textbf{X}_i,Y_i)^{n}_{i=1}$ sample of i.i.d. input-output pairs $(\textbf{X}_i,Y_i) \in \mathcal{X}\times{\mathcal{Y}}$ distributed according to some unknown distribution $\mathcal{P}$. We also chose a certain family of solutions $\mathcal{F}$.
    Our goal is to build a new predictor $\widehat{f}$ minimizing the excess risk
    $$ 
    \mathcal{E}(\widehat{g}) := \E(\widehat{g} - Y)^2 - 
    \inf_{f \in \mathcal{F}}\E(f - Y)^2.
    $$
    Let $\HE$ denote the empirical expectation operator
    $$ \HE(f) := \frac{1}{n} \sum_{i=1}^n f(\textbf{X}_i)$$
    and call
    $\widehat{g} \in \mathcal{F}$ a \textit{$\Delta$-empirical risk minimizer in $\mathcal{F}$} if the following inequality holds
        $$
          \HE(\widehat{g} - Y)^2 \le \min_{f \in \mathcal{F}} \widehat{\E} (f - Y)^2 + \Delta.
        $$
    We suggest the next two step procedure. In the first, calculate $\big\{\widehat{g}_i\big\}_{i=1}^d$ -- different $\Delta_1$-empirical risk minimizers in $\mathcal{F}$. And then look for a $\Delta_2$-empirical risk minizer in the next set:
    \begin{equation}
                Star_d\big(\mathcal{F}, \widehat{g}_1, \ldots \widehat{g}_d\big) := \bigg\{\sum_{i=1}^d \lambda_i \widehat{g}_i + \underbrace{\big(1 - \sum_{i = 1}^d \lambda_i\big)}_{\lambda} f \;\bigg| \; \lambda_i, \lambda \in [0, 1]; \; f \in \mathcal{F}\bigg\}.
    \end{equation}
    We will call the found function $\widehat{f} = \widehat{f}\left(\mathcal{F},\,d,\,\Delta_1,\Delta_2\right)$ as \textit{$Star_d$ estimator}. The main result of our work is the proof that the proposed estimator has an optimal excess risk convergence rate in the case when $\mathcal{F}$ is a class of fully connected neural networks. It is defined by the choice of the activation function  $\sigma: \mathbb{R} \rightarrow \mathbb{R}$ and the network architecture. We study neural network with activation function ReLu:
    $$\sigma(x) := max(x,0).$$
    For $\textbf{v}=(v_1, \dots,v_r) \in \mathbb{R}^r$ define shifted activation function $\sigma_{\textbf{v}}: \mathbb{R}^r \rightarrow \mathbb{R}^r$:
    
    $$ \sigma_{\textbf{v}}(\textbf{x}) := \big(\sigma(x_i - v_i) \big)_{i=1}^r .$$
    
    The network architecture $(L,\textbf{p})$ consists of a positive integer $L$ called the \textit{number of hidden layers or depth} and a \textit{width vector} $\textbf{p} = (p_0,\dots,p_{L+1}) \in \mathbb{N}^{L+2}$. A neural network with network architecture $(L, \textbf{p})$ is then any function of the form
    \begin{equation}
    \label{def_fcn}
        f(\textbf{x}) = W_L \sigma_{\textbf{v}_L} W_{L-1} \sigma_{\textbf{v}_{L-1}} \dots W_1 \sigma_{\textbf{v}_1} W_0 \textbf{x},
    \end{equation}
    where $W_j$ is a $p_{j+1}\times p_j$ matrix and $\textbf{v}_i \in \mathbb{R}^{p_i}$ is a shift vector.
    
    We will focus on the case when the model parameters satisfy some constraint. Denote $\|W_j\|_{\infty}$ the maximum-entry norm of $W_j$ , $\|W_j\|_0$  the number of non-zero/active entries of $W_j$ then
    the space of network functions with given network architecture and network parameters bounded by one is
    $$
    \mathcal{F}(L, \textbf{p}) := \bigg\{ f \text{of the form \eqref{def_fcn}}: \max_{j=0,\dots,L} \|W_j\|_\infty \vee \|\textbf{v}_j\|_\infty \leq 1 \bigg\}
    $$  
    and the s-sparse networks are given by 
    $$
    \mathcal{F}(L, \textbf{p}, s) := \bigg\{ f \in \mathcal{F}(L,  \textbf{p}): \sum_{j=0}^L \|W_j\|_0 + \|\textbf{v}_j\|_0 \leq s \bigg\}.
    $$
    Let's denote by $\mathcal{N}_\infty(\mathcal{F},\,\varepsilon)$, $\mathcal{N}_2(\mathcal{F},\,\varepsilon)$ the size of the $\varepsilon$-net of $\mathcal{F}$ in the metric space $L_\infty$ and $L_2$, respectively.
    Then from Lemma 5 in \cite{schmidt2020nonparametric} we have
    \begin{equation}
    \label{shmidt_norm}
    \forall f \in \mathcal{F}(L, \,\textbf{p}, s): \, \|f\|_\infty \leq V(L+1)
    \end{equation}
    and 
    \begin{equation}
    \label{shmidt_cover}
    \log \mathcal{N}_2(\mathcal{F}(L,\,\textbf{p}, s),\delta)\leq \log \mathcal{N}_\infty(\mathcal{F}(L,\,\textbf{p}, s),\delta) \leq (s+1) \log (2\delta^{-1}(L+1)V^2),
    \end{equation}
    where
    \begin{equation}
    \label{V}
    V:= \prod_{l=0}^{L+1}(p_l+1). 
    \end{equation}
    
    Also let define the  risk-minimizer in $\mathcal{F}$ and some sets:
    \begin{align}
    \label{set_Hull}
        &Hull_d\big(\mathcal{F}\big) :=  \bigg\{\sum_{i=1}^d \lambda_i (f_i - f) \;\bigg| \; \lambda_i \in [0, 1];\; \sum_{i = 1}^d \lambda_i \le 1;\; f, f_1 \ldots f_d \in \mathcal{F}\bigg\},\\
    \label{set_H}
        &f^* := \argmin_{f \in \mathcal{F}} \E (f(\textbf{X}) - Y)^2, \,\,\,\,\, \mathcal{H} := \mathcal{F} - f^* + Hull_d(\mathcal{F}).
    \end{align}
    Notice, that $Star_d$ estimator $\widehat{f}$ lies in $\mathcal{H}+f^*$. With the introduced notation, one of our main results is stated as follows.

        \begin{theorem}
        Let $\widehat{f}$ be a $Star_d$ estimator and $\mathcal{H}$ be the set defined in \ref{set_H} for $\mathcal{F} = \mathcal{F}(L, \textbf{p}, s)$. The following expectation bound on excess loss holds:
        $$
            \E \mathcal{E}(\hat{f}) \le 2(F'+V(L+1)) \cdot \left[ \frac{K}{n} + M \cdot \frac{c_{\ref{bound_H}}  d \, s \log \big(V L \,n \,d\big)}{n}\right] + 4[\Delta_1 + \Delta_2],
        $$
        where 
        $$ C = \min\left\{\frac{c_{\ref{geom_ineq}}}{4F'}, \frac{c_{\ref{geom_ineq}}}{4V(L+1)(2+c_{\ref{geom_ineq}})}\right\}, \,\,\,
        F' = \sup_\mathcal{F} |Y - f|_\infty,
        $$
        
        \begin{equation}
            \label{eq:km}
            K:=2\left(\sqrt{\sum_{i=1}^n {\xi^2/n}}+C\right),
            \quad
            M:= \sup_{h \in \mathcal{H} \setminus \{0\}}4\frac{\sum_{i=1}^{n}h(\textbf{X}_i)^2\xi_i^2}{C \sum_{i=1}^{n}h(\textbf{X}_i)^2}.
        \end{equation}

    \end{theorem}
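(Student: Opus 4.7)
The plan is to translate the problem into the offset class $\mathcal{H}$ via $\widehat{h} := \widehat{f} - f^*$, apply a basic ERM inequality, and then control a localised empirical process over $\mathcal{H}$ using the $L_\infty$-covering bound \eqref{shmidt_cover}. First, write $\widehat{f} = \lambda f + \sum_{i=1}^d \lambda_i \widehat{g}_i$ with $\lambda + \sum_i \lambda_i = 1$, so that
$$
\widehat{h} = (f - f^*) + \sum_{i=1}^d \lambda_i(\widehat{g}_i - f) \in \mathcal{F} - f^* + Hull_d(\mathcal{F}) = \mathcal{H}.
$$
Setting $\xi := Y - f^*$, the excess risk factors as $\mathcal{E}(\widehat{f}) = \E\widehat{h}^2 - 2\E[\widehat{h}\xi]$. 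Since $f^* \in \mathcal{F} \subset Star_d$, the $\Delta_2$-minimality of $\widehat{f}$ yields the empirical basic inequality $\HE\widehat{h}^2 - 2\HE[\widehat{h}\xi] \le \Delta_2$; the analogous inequality using $\widehat{g}_i \in Star_d$ together with the $\Delta_1$-ERM property of each $\widehat{g}_i$ in $\mathcal{F}$ is what will contribute the $\Delta_1$ slack.

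Next, I would bound the population counterpart uniformly by
$$
\E\widehat{h}^2 - 2\E[\widehat{h}\xi] \le \Delta_2 + \sup_{h \in \mathcal{H}}\Big\{(\E - \HE)\,h^2 - 2(\E - \HE)[h\xi]\Big\}.
$$
This supremum is treated in three pieces. (a) \emph{Covering}: $\mathcal{H}$ is the Minkowski sum of $\mathcal{F} - f^*$ and $Hull_d(\mathcal{F})$, so after gridding the $(d+1)$-dimensional simplex and invoking \eqref{shmidt_cover}, a $\delta$-net of $\mathcal{H}$ in $L_\infty$ has log-cardinality $\lesssim d \, s \log\!\big(VL/\delta\big)$, yielding the $\log(VLnd)$ factor. (b) \emph{Variance proxy}: a Bernstein-type deviation applied on each cover element whose variance along $h$ is proportional to $n^{-1}\sum_i h(\mathbf{X}_i)^2 \xi_i^2$---this is precisely the numerator defining $M$ in \eqref{eq:km}. (c) \emph{Peeling}: along shells of constant $\HE h^2$, in order to absorb the quadratic population term $\E h^2$ back into the Bernstein variance and upgrade the slow rate to the fast rate $M \cdot d s \log(VLnd)/n$; the linear $K/n$ contribution comes from the sub-Gaussian envelope supplied by $F' + V(L+1)$ (via \eqref{shmidt_norm}) paired with the empirical noise term $\sqrt{n^{-1}\sum \xi_i^2}$ entering Bernstein at the coarsest scale.

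The main obstacle lies in step (c). The classical Audibert-style peeling uses exact star-shapedness of the hypothesis class around the comparator, but $\mathcal{H}$ is \emph{not} literally star-shaped around $0$: scaling $f - f^* \in \mathcal{F} - f^*$ by $t\in[0,1]$ need not remain in $\mathcal{F} - f^*$, since the sparse neural network class $\mathcal{F}$ is non-convex. I would circumvent this by decomposing $h = (f - f^*) + h'$ with $h' \in Hull_d(\mathcal{F})$ and peeling only the convex piece $h'$, which lives in a genuinely star-shaped (in fact convex) set around $0$ amenable to the geometric inequality \eqref{geom_ineq}, while dominating the residual $f - f^*$ uniformly by its envelope from \eqref{shmidt_norm}. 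Combining the basic inequality with the peeled deviation estimate and taking expectation over the sample then delivers the stated bound: the prefactor $2(F'+V(L+1))$ tracks the envelope, the $4[\Delta_1 + \Delta_2]$ absorbs the two ERM slacks (once via $f^*$ and once via the $\widehat{g}_i$), and the constants $C$, $c_{\ref{geom_ineq}}$, $c_{\ref{bound_H}}$ are exactly those produced by the geometric inequality and the covering/peeling steps.
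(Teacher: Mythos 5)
Your high-level architecture (pass to $\mathcal{H}=\mathcal{F}-f^*+Hull_d(\mathcal{F})$, cover $\mathcal{H}$ via \eqref{shmidt_cover} to get the $d\,s\log(VLnd)$ factor, control a multiplier process with variance $\sum_i h(\textbf{X}_i)^2\xi_i^2$) matches the paper, but the core of your argument has a genuine gap: you start from the basic inequality $\HE\widehat{h}^2-2\HE[\widehat{h}\xi]\le\Delta_2$, which uses only that $f^*\in Star_d$, and you never exploit the two-step structure of the estimator. The paper's proof instead hinges on the geometric inequality (Lemmas \ref{geom_ineq_lemma-1} and \ref{geom_ineq}): because $\widehat{f}$ nearly minimizes over all segments joining the first-stage near-minimizer $\widehat{g}$ to points of $\mathcal{F}$, one gets $\HE(h-Y)^2-\HE(\widehat{f}-Y)^2\ge c_{\ref{geom_ineq}}\HE(\widehat{f}-h)^2-2(1+c_{\ref{geom_ineq}})[\Delta_1+\Delta_2]$ for every $h\in\mathcal{F}$. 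Taking $h=f^*$ produces the \emph{negative} empirical quadratic term $-(1+c_{\ref{geom_ineq}})\HE(f^*-\widehat{f})^2$ in Corollary A.3, and it is exactly this offset that makes the Rademacher complexity in Theorem \ref{loss_expectation} self-localizing and yields the $1/n$ rate without any peeling. This is also where $\Delta_1$ actually enters (through the radius comparison $\|\widehat{g}-Y\|_n^2\le r_1+\Delta_1$ inside the cone construction), not through a second basic inequality as you suggest.

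Your proposed substitute for this step does not close. You correctly observe that $\mathcal{H}$ is not star-shaped around $0$ because $\mathcal{F}$ is non-convex, but your fix --- peel only the convex piece $h'\in Hull_d(\mathcal{F})$ and ``dominate the residual $f-f^*$ uniformly by its envelope'' --- sacrifices precisely the component where the fast rate is hard. For the non-convex part, the Bernstein condition $\E(f-f^*)^2\lesssim\mathcal{E}(f)$ fails (the first-order optimality $\E[(f-f^*)\xi]\le 0$ requires convexity of $\mathcal{F}$), so bounding the multiplier process $(\HE-\E)[(f-f^*)\xi]$ at the envelope scale $V(L+1)$ gives a $\sqrt{\log\mathcal{N}/n}$ contribution that cannot be absorbed: there is no matching negative quadratic term available from the basic inequality alone. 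Indeed, the result of Lecu\'e and Mendelson cited in the paper shows that near-ERM relative to the comparator $f^*$ over a non-convex class genuinely fails to achieve the fast rate, so no proof that uses only ``$\widehat{f}$ is an approximate ERM over a set containing $f^*$'' can succeed; the geometric inequality supplied by the star construction is the indispensable ingredient your proposal is missing.
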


    Thus, the expectation of excess risk for $Star_d$ estimator is limited for a fixed architecture neural network to a value of the order 
    $$
    \mathcal{O}\left(\frac{s \, d \, \log (V\,L\,n)}{n}+\Delta_1+\Delta_2\right).
    $$
    
    In order to formulate an upper bound for the excess risk, performed with a high probability, we need to impose some constraints on the class of functions.
    \begin{definition}[Lower Isometry Bound]
    \label{isom_bound}
        Class $\mathcal{F}$ satisfies the lower isometry bound with some parameters $0 < \eta < 1$ and $0 < \delta < 1$ if
        $$
            \mathbb{P}\left(\inf_{f \in \mathcal{F} \setminus \{0\}}\frac{1}{n} \sum_{i=1}^n \frac{f^2(\textbf{X}_i)}{\E f^2} \ge 1 - \eta\right) \ge 1 - \delta
        $$
        for all $n \geq n_0(\mathcal{F},\delta, \eta)$, where $n_0(\mathcal{F},\delta, \eta)$ depends on the complexity of the class.
    \end{definition}
    
Now we are ready to formulate the main theoretical result of the paper.
\begin{theorem}
        Let $\widehat{f}$ be a $Star_d$ estimator and let $\mathcal{H}$ be the set defined in \ref{set_H} for $\mathcal{F} = \mathcal{F}(L,\,\textbf{p},\,s)$.
        Assume for $\mathcal{H}$ the lower isometry bound in Definition \ref{isom_bound} holds with $\eta_{lib} = c_{\ref{geom_ineq}}/4$ and some $\delta_{lib} < 1$. Let $\xi_i = Y_i - f^*({\textbf{X}_i})$.
        Define
        $$
            A := \sup_{h \in \mathcal{H}} \frac{\E h^4}{(\E h^2)^2} \,\text{ and } B := \sup_{\textbf{X}, Y}\E\xi^4.
        $$
        Then there exist 3 absolute constants $c_{\ref{main_th}}' , \tilde{c_{\ref{main_th}}}, c_{\ref{main_th}} > 0$ (which only depend on $c_{\ref{geom_ineq}}$), such that 
        $$
            \Prob\left(\mathcal{E}(\widehat{f}) > 4(D+\Delta_1+\Delta_2)\right)\leq 4(\delta_{lib}+\delta)
        $$
        as long as $n > \frac{16(1-c_{\ref{main_th}}')^2 A}{c_{\ref{main_th}}'^2} \lor n_0(\mathcal{H}, \delta_{lib}, c_{\ref{main_th}}/4)$, 
        where 
        $$
            K:=\left(\sqrt{\sum_{i=1}^n {\xi^2/n}}+2\tilde{c_{\ref{main_th}}}\right), \,\,\,
            M:= \sup_{h \in \mathcal{H} \setminus \{0\}}\frac{\sum_{i=1}^{n}h({\textbf{X}_i})^2\xi_i^2}{\tilde{c_{\ref{main_th}}} \sum_{i=1}^{n}h({\textbf{X}_i})^2},
        $$
        $$
            D:= \max\left(\frac{K}{n} + M \cdot \frac{c_{\ref{bound_H}}  d \, s \log \big(V L \,n \,d\big) + \log \frac{1}{\delta}}{n}, \frac{32\sqrt{AB}}{c_{\ref{main_th}}'} \frac{1}{n}\right) 
        $$
        and $c_{\ref{bound_H}}$ is an independent constant.
\end{theorem}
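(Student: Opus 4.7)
The plan is to follow a localized empirical process argument tailored to the star-shape of $Star_d$. I start from the $\Delta_2$-optimality of $\widehat{f}$: since $f^* \in \mathcal{F} \subset Star_d$, the inequality $\HE(\widehat{f}-Y)^2 \le \HE(f^*-Y)^2 + \Delta_2$ holds. Setting $h := \widehat{f} - f^* \in \mathcal{H}$ and $\xi_i := Y_i - f^*(\textbf{X}_i)$ and expanding the square gives the one-sided empirical bound $\HE h^2 \le \tfrac{2}{n}\sum_{i=1}^n h(\textbf{X}_i)\xi_i + \Delta_2$. I then invoke the geometric lemma referenced by $c_{\ref{geom_ineq}}$, which exploits the star-shapedness of $Star_d$ around each anchor $\widehat{g}_i$: perturbing $\widehat{f}$ along a segment towards $\widehat{g}_i$ and reusing the $\Delta_2$-optimality together with the $\Delta_1$-suboptimality of the anchors strengthens the previous bound to the fast-rate form $c_{\ref{geom_ineq}} \HE h^2 \le \tfrac{2}{n}\sum h(\textbf{X}_i)\xi_i + 4(\Delta_1+\Delta_2)$, which is exactly the shape that enables a $1/n$ rate rather than $1/\sqrt{n}$.

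The heart of the argument is then a uniform Bernstein / Talagrand-type deviation bound for the offset process $h \mapsto \tfrac{1}{n}\sum h(\textbf{X}_i)\xi_i - \lambda \HE h^2$ over $h \in \mathcal{H}$, carried out in three substeps. First, I bound the metric entropy of $\mathcal{H}$ at the relevant scale using the decomposition $\mathcal{H} = \mathcal{F} - f^* + Hull_d(\mathcal{F})$ together with the Schmidt-Hieber covering estimate \eqref{shmidt_cover}; the $d$-fold $Hull_d$ part is what contributes the extra factor of $d$, producing the $c_{\ref{bound_H}}\, d\, s\log(VLnd)$ term that appears inside $D$. Second, a chaining / peeling argument inserts the random variance proxy $M$ of \eqref{eq:km} in front of the entropy and peels off the linear-in-$\xi$ tail into the $K/n$ contribution, with the $\log(1/\delta)$ addition coming from the confidence union bound at each chaining level. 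Third, the residual that still involves $\E h^2$ but cannot be absorbed into the entropy is handled by a Chebyshev-type moment argument using the $L_4/L_2$ equivalence constant $A$ on $\mathcal{H}$, the fourth-moment bound $B$ on $\xi$, and the sample-size hypothesis $n > 16(1-c_{\ref{main_th}}')^2 A / c_{\ref{main_th}}'^{\,2}$; this residual becomes exactly the second branch $32\sqrt{AB}/(c_{\ref{main_th}}' n)$ inside the $\max$ defining $D$.

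Finally I transfer from empirical to population $L_2$ norm using the lower isometry hypothesis on $\mathcal{H}$ with $\eta_{lib}=c_{\ref{geom_ineq}}/4$, which is precisely where that assumption enters and which costs probability $\delta_{lib}$, and I control $2\E[h\xi]$ on the same concentration event to pass from $\E h^2$ to $\mathcal{E}(\widehat{f}) = \E h^2 - 2\E[h\xi]$. A union bound over the three failure events (concentration, lower isometry, and the Chebyshev residual) produces total failure probability $4(\delta_{lib}+\delta)$, and after tracking the absolute constants $c_{\ref{main_th}},\tilde{c}_{\ref{main_th}},c_{\ref{main_th}}'$ the announced estimate $\mathcal{E}(\widehat{f}) \le 4(D+\Delta_1+\Delta_2)$ falls out. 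I expect the main technical obstacle to be the middle step: the output of the concentration argument has to match the precise shapes of $K$, $M$, and $D$ simultaneously, which tightly constrains how the peeling levels are chosen and how the variance proxy is inserted. The geometric lemma, although conceptually the key structural input, is a multidimensional extension of Audibert's well-understood $d=1$ argument.
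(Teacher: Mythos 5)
Your plan is correct and follows essentially the same route as the paper: the geometric inequality for $\Delta$-minimizers (Lemma \ref{geom_ineq}) yields the fast-rate basic inequality, Theorem \ref{loss_probability} (Liang et al.'s Theorem 7) handles the symmetrization, the lower-isometry transfer and the $32\sqrt{AB}/(c'n)$ branch, and the covering bound for $\mathcal{H}$ (Lemma \ref{cover_ineq}, Corollary \ref{bound_H}) combined with the finite-class maximal inequality (Lemmas \ref{bound_complexity} and \ref{finite_bound}) produces the $K$, $M$ and entropy terms in $D$. The only cosmetic difference is that the paper uses a single-scale $\varepsilon=1/n$ discretization rather than full chaining/peeling, which suffices for the stated rate.
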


    That is, with some assumptions on the class of neural networks $\mathcal{F}$, we again obtained the order $\mathcal{O}\left(\frac{\log n \mathcal{N}}{n}\right)$ of convergence of the excess risk. Note that in the general case for an infinite class functions such an asymptotic rate with respect to the sample size $n$ is \textit{unimprovable} (for example, see \cite[Theorem 3 and Section 5]{rakhlin2017empirical}).
    \clearpage

{\section{Experiments}
\label{experiments}}

\subsection{Realization}
\SetKwComment{Comment}{/* }{ */}
\RestyleAlgo{ruled}

\begin{wrapfigure}[10]{r}{0.55\linewidth}
    \vspace{-10pt}
    \begin{algorithm}[H]
    \caption{$Star_d$ algorithm}\label{alg:two}
    \KwData{class of functions $\mathcal{F}$ and parameter $d$}
    \KwResult{estimator $\widehat{f}$}
    $\widehat{g}_1\gets calculate\_erm(set = \mathcal{F}, seed = seed_1)$\;
    \dots\\
    $\widehat{g}_d\gets calculate\_erm(set = \mathcal{F}, seed = seed_d)$\;
    $\widehat{f} \gets calculate\_erm(set = Star_d, seed = seed)$\;
    \textbf{return} $\widehat{f}$
    \label{alg:star_d}
    \end{algorithm}  
    \hspace{-20pt}
\end{wrapfigure}

The proposed $Star_d$ procedure can be represented by the following pseudocode (see Algorithm \ref{alg:star_d}). Since in practice we will not be able to give an infinite number of predictors, in fact the algorithm takes the architecture of the neural network as $\mathcal{F}$, without fixing its parameters. The $calculate\_erm$ procedure is some optimization process that reduces the empirical risk. Also, in practice, it is impossible to search for a global empirical risk minimizer in the space of neural networks, which is why we introduced the concept of  $\Delta$-minimizers. As follows from our results, the more accurate the optimization is at each step of the algorithm, the higher the accuracy guarantee of the final predictor $\widehat{f}$.

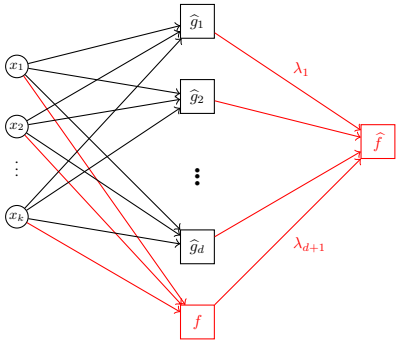
\begin{wrapfigure}[13]{r}{0.4\linewidth}
\begin{tikzpicture}[scale=0.8, every node/.style={scale=0.6}]
	\tikzstyle{unit}=[draw,shape=circle,inner sep=1pt,minimum size=0.5cm]
	\tikzstyle{ridden}=[draw=red!,shape=rectangle,inner sep=1pt,minimum size=0.75cm]
	\tikzstyle{hidden}=[draw, shape=rectangle,inner sep=1pt,minimum size=0.75cm]

	\node[unit](x0) at (0,4.75){\small $x_1$};
	\node[unit](x1) at (0,3.75){\small $x_2$};
	\node at (0,3.125){\vdots};
	\node[unit](xd) at (0,2.25){\small $x_k$};

	\node[hidden](h10) at (3,5.5){ $\widehat{g}_1$};
	\node[hidden](h11) at (3,4.25){ $\widehat{g}_2$};
	\node[font=\Huge] at (3,3){\vdots};
	\node[hidden](h1m) at (3,1.75){ $\widehat{g}_d$};
	\node[hidden,red](h1m1) at (3,0.5){$f$};

	\node[hidden, red](y) at (6,3.5){$\widehat{f}$};

	\draw[->] (x0) -- (h10);
	\draw[->] (x0) -- (h11);
	\draw[->] (x0) -- (h1m);
	\draw[->,red] (x0) -- (h1m1);

	\draw[->] (x1) -- (h10);
	\draw[->] (x1) -- (h11);
	\draw[->] (x1) -- (h1m);
	\draw[->,red] (x1) -- (h1m1);
	
	\draw[->] (xd) -- (h10);
	\draw[->] (xd) -- (h11);
	\draw[->] (xd) -- (h1m);
	\draw[->,red] (xd) -- (h1m1);
	
	\draw[->, red] (h10) -- node [text width=2cm,midway,above right] {$\lambda_1$}(y);
	\draw[->, red] (h11) -- node [text width=2cm,midway,below right] {$ $}(y);
	\draw[->, red] (h1m) -- node [text width=2cm,midway,above] {$ $}(y);
	\draw[->, red] (h1m1) -- node [text width=2cm,midway,below right] {$\lambda_{d+1}$}(y);
 	\end{tikzpicture}
 	
 	\caption{Illustrate $Star_d$ algorithm on NN}
 	\label{fig:illustrate_alg}
\end{wrapfigure}

    Despite the fact that the second step of the star algorithm requires an optimization procedure over some complex set $Star_d$, this is fairly easy to implement in practice (see Figure \ref{fig:illustrate_alg}). Suppose that we have fixed some architecture of estimator (black block), then in the first step we independently optimize the weights of the blocks $\widehat{g}_i$, freeze them and in the second step we add a new block $f$, connecting all of them by convex layer $\lambda_i$ (red elements) and optimize them. This actually iterates over all possible simplices, optimizing the weights of the lower block, and all possible points within the simplex, optimizing the convex weights. To ensure the convexity coefficients in practice we took softmax from the parameter vector.

\subsection{Competitors and other details}
\label{competitors}
    Looking at the structure of the proposed star procedure (let us call it \textbf{Classic Star (no warm-up)}), a desire naturally arises to compare it with the following models. First, with the model where all $d+1$ blocks together with the connecting \textit{linear} layer are trained simultaneously (\textbf{Big NN}), and second, where all $d+1$ blocks are trained independently, and then their predictions are averaged (\textbf{Ensemble}).
    
    But the star procedure has a number of possible problems. Training even one neural network is a rather complicated process, while in our algorithm it is required to train $d+1$ predictors. 
    To solve this problem, we propose to train the $d+1$ models in the ensemble method and the first $d$ models in the Classic Star procedure using the snapshot technique, calling such approaches \textbf{Snap Ensemble} and \textbf{Snap Star}, respectively.
    
    Additionally, at the last step of the star procedure, when optimizing the weights of the last model and convex coefficients, the minimizer has the opportunity to set the contribution of the last model to zero and focus on the aggregation of already trained $d$ models. We offer variations of the star algorithm that try to deal with these. 
    It is possible to train the last snapshot (model $d$) from the Snap Star as a $d + 1$ model, this is what we call \textbf{Snap Star (shot warm-up)}, also can add a new model to the Snap Star or Classic Star and spend part of the computational budget on training only its weights, the so-called warm-up, and get \textbf{Snap Star (new warm-up)} and \textbf{Classic Star (new warm-up)}, respectively. We compare all these methods with each other on a number of tasks.
    The amount of the computational budget for training a separate block we will denote as $epochs$, in the snapshot technique \textit{learning rate} decreases with cyclical cosine annealing \cite{loshchilov2016sgdr}.
    Architecture of neural networks and hardware for training are be reported for each task separately.
    We ran all experiments using the \fnurl{PyTorch}{https://pytorch.org/ library} library. For the purposes of reproducing the results, the code and extended tables with results are publicly available at \fnurl{repository}{https://github.com/mordiggian174/star-ensembling}.

\subsection{Datasets}
\label{datasets}

\textbf{Boston House Pricing}
    We conduct the first experiment on the Boston House Pricing dataset \cite{harrison1978hedonic}, whose task is to predict the value of real estate according to some characteristics. The ratio of training and test samples is equal to $7:3$. Standard scaler was used as preprocessing, batch size is $32$. A small fully connected ReLu neural network of $4$ layers was chosen as the architecture of the neural network, the number of neurons on the first layer is $128$, then with the growth of the layer it decreases by $2$ times, DropOut with parameter $p$ and batch normalization are applied between the layers. For warming up a new model, 40\% of the budget for block training is spent. Training was done using Adam \cite{kingma2014adam} on the CPU Apple M1. Error results on the test set were averaged over 5 runs.
    Some of the data obtained in the course of numerical experiments we present in the Table \ref{table:boston200}. Hereinafter, we highlight in \textbf{bold} the best results with a fixed parameter of the number of models $d$.
    
    It is worth adding that without the use of the DropOut technique, the usual Ensemble with some parameters proved to be quite good (e.g. \textbf{10.236} MSE after 100 $epochs$ with $d=5$, $p=0.0$, $lr=0.01$). 
    In total, about 100 different sets of parameters were tried on this launch, we attach part of the results in Appendix \ref{Appendix B}, and the full versions can be found in the repository. 
\begin{table}[ht!]
\centering
\begin{tabular}{|l|llllll|}
\hline
Name                    & d & MSE          & MAE   & $R^2$ & TRAIN MSE & TIME (sec) \\
\hline \hline
Snap Star (shot warm-up)   & 5 & 11.155±0.967 & 2.185  & 0.866     & 1.232     & 52.4 \\
Snap Star (new warm-up)    & 5 & 10.677±1.034 & 2.148  & 0.872     & 1.306     & 49.2 \\
Snap Ensemble           & 5 & 11.123±0.615 & 2.206  & 0.866     & 1.728     & 43.4 \\
Ensemble                & 5 & 10.172±0.621 & 2.164  & 0.878     & 2.133     & 43.2 \\
Classic Star (no warm-up)  & 5 & \textbf{9.650±0.911}  & 2.092  & \textbf{0.884}     & 1.373     & 52.0 \\
Classic Star (new warm-up) & 5 & 9.664±0.324  & \textbf{2.086} & \textbf{0.884}     & 1.374     & 45.8 \\
Big NN                  & 5 & 12.117±1.601 & 2.326  & 0.854     & 2.160     & 28.4 \\
\hline
Snap Star (shot warm-up)   & 4 & 10.745±0.640 & 2.154  & 0.871     & 1.192     & 43.0 \\
Snap Star (new warm-up)    & 4 & 10.862±0.537 & 2.162  & 0.870     & 1.401     & 43.0 \\
Snap Ensemble           & 4 & 11.169±0.558 & 2.205  & 0.866     & 1.679     & 36.0 \\
Ensemble                & 4 & 9.967±0.244  & 2.144  & 0.880     & 1.833     & 35.8 \\
Classic Star (no warm-up)  & 4 & 9.887±0.501  & \textbf{2.111}  & 0.881     & 1.326     & 43.6 \\
Classic Star (new warm-up) & 4 & \textbf{9.741±0.790}  & 2.114  & \textbf{0.883}     & 1.400     & 38.2 \\
Big NN                  & 4 & 11.856±0.429 & 2.323  & 0.858     & 1.937     & 24.0 \\
\hline
\end{tabular}
\caption{BOSTON HOUSE PRICING. Part of results at 200 epochs, $p=0.1$, $lr=0.01$}
\label{table:boston200}
\end{table}    
    
\textbf{Fashion Mnist}
    The second experiment was carried out on the Fashion Mnist dataset \cite{xiao2017fashion}, which consists of $70,000$ images ($28 \times 28$ pixels). It is required to classify images by clothing classes. The ratio of training and test samples is equal to $6:1$. No scaler is used, batch size is $64$. A simple convolutional network LeNet was chosen as a solution to this task. For warming up a new model, 40\% of $epochs$ is again spent.
    Training was done using Adam on the CPU Apple M1.
    Error results on the test set were averaged over 3 runs.

    As can be seen from the results (see Table \ref{table:fmnist15}), the algorithm proposed by us is again in the lead. 
    We also did one relatively hard run with parameters $d=5$ and 25 $epochs$ for train, the Classic Star (new warm-up) won with 92.2\% accuracy. At the moment, they take $10-11th$ place in the \fnurl{leaderboard}{https://paperswithcode.com/sota/image-classification-on-fashion-mnist} for this dataset. For comparison, the model from the $10th$ place has more than $380K$ parameters. It is worth noting that we used fairly simple models (about $266K$ parameters) and training took about $40$ minutes on a regular laptop. The results of training by our method also exceed the indicators
    \fnurl{Resnet18}{https://github.com/kefth/fashion-mnist} with over $2M$ parameters and 92\% accuracy. 
\begin{table}[ht!]
\centering
\scalebox{1}{
\begin{tabular}{|l|llll|}
\hline
Name                    & d & accuracy    & entropy     & TIME (sec)   \\
\hline
\hline
Snap Star (shot warm-up)   & 3 & 0.902±0.001 & 0.354±0.017 & 1021.3 \\
Snap Star (new warm-up)    & 3 & 0.902±0.001 & 0.348±0.015 & 940.3  \\
Snap Ensemble           & 3 & 0.904±0.002 & 0.274±0.006 & 819.7  \\
Ensemble                & 3 & 0.910±0.003 & 0.247±0.004 & 818.7  \\
Classic Star (no warm-up)  & 3 & \textbf{0.913±0.002} & 0.245±0.007 & 1014.7 \\
Classic Star (new warm-up) & 3 & \textbf{0.913±0.004} & \textbf{0.243±0.008} & 855.3  \\
Big NN                  & 3 & 0.905±0.004 & 0.297±0.011 & 643.0    \\
\hline
Snap Star (shot warm-up)   & 2 & 0.906±0.004 & 0.292±0.015 & 751.0    \\
Snap Star (new warm-up)    & 2 & 0.907±0.005 & 0.284±0.012 & 697.0    \\
Snap Ensemble           & 2 & 0.908±0.004 & 0.266±0.011 & 616.0    \\
Ensemble                & 2 & \textbf{0.912±0.004} & 0.251±0.004 & 614.3  \\
Classic Star (no warm-up)  & 2 & 0.910±0.000 & 0.245±0.003 & 747.3  \\
Classic Star (new warm-up) & 2 & 0.911±0.002 & \textbf{0.244±0.004} & 613.3  \\
Big NN                  & 2 & 0.902±0.001 & 0.303±0.007 & 498.3  \\
\hline
\end{tabular}
}
\caption{FASHION MNIST. Part of results at 15 epochs, $lr = 0.001$}
\label{table:fmnist15}
\end{table}    

\textbf{Million Song}
    For the last experiment we use UCI \cite{Dua:2019} subset of Million Song dataset\cite{Bertin-Mahieux2011}, containing 515345 songs, with 90 features each. Dataset's features are timbre average and covariance for every song, target is the year the song was released. Dataset is divided intro train and test in proportion $9 : 1$, where test is the last $10\%$ of the dataset. Features and targets were standard normalized prior to model training, batch size is 128. Base model for this task is fully-connected 4-layer ReLu neural network with layer dimensions $90, 120, 20, 1$. 
    After each hidden layer, batch normalization and DropOut with probability of $0.2$ are applied.
    Snapshots used SGD optimizer starting at \textit{learning rate} of $0.1$. All other models used Adam optimizer with \textit{learning rate} of $0.001$. In contrast to previous experiments, for warming up a new model, 10\% of $epochs$ is spent and shot warm-up used next fully trained snapshot instead of a last shot (that is, the total training budget is $(d+2)\cdot epochs$).
    Training was done on the NVIDIA GeForce GTX 1060 GPU (6gb) / Intel Core i5-7500. Error results were averaged over 5 runs.
    Some results are shown in the Table \ref{table:MSD30}.
\begin{table}
\centering
\begin{center}
\begin{tabular}{|l|llllll|} 
\hline
Name & d & MSE & MAE & $R^2$ & Train MSE & TIME (sec)\\
\hline
\hline
Snap Star (shot warm-up) & 5 & $76.70 \pm 0.28$ & 5.97 & 0.358 & 68.05 & 2369 \\
Snap Star (new warm-up) & 5 & $76.16 \pm 0.15$ & 5.96 & 0.363 & 69.39 & 2076 \\
Snap Ensemble & 5 & $76.48 \pm 0.18$ & 6.01 & 0.360 & 68.15 & 1766 \\
Ensemble & 5 & $75.74 \pm 0.06$ & 5.99 & 0.366 & 68.99 & 1955\\
Classic Star (no warm-up) & 5 & $75.97 \pm 0.20$ & 6.02 & 0.365 & 70.88 & 2150\\
Classic Star (new warm-up) & 5 & $\mathbf{75.62 \pm 0.06}$ & 5.96 & 0.367 & 70.29 & 2176\\
Big NN & 5 & $76.23 \pm 0.19$ & 5.95 & 0.362 & 71.70 & 1166\\
\hline
Snap Star (shot warm-up) & 4 & $76.74 \pm 0.20$ & 5.96 & 0.358 & 68.44 & 2026 \\
Snap Star (new warm-up) & 4 & $76.22 \pm 0.18$ & 5.97 & 0.362 & 69.74 & 1753 \\
Snap Ensemble & 4 & $76.48 \pm 0.19$ & 6.01 & 0.360 & 68.41 & 1473 \\
Ensemble & 4 & $75.78 \pm 0.07$ & 5.99 & 0.366 & 69.01 & 1636\\
Classic Star (no warm-up) & 4 & $76.02 \pm 0.09$ & 6.01 & 0.364 & 71.11 & 1811\\
Classic Star (new warm-up) & 4 & $\mathbf{75.76 \pm 0.17}$ & 5.96 & 0.366 & 70.41  & 1823\\
Big NN & 4 & $76.29 \pm 0.29$ & 5.96 & 0.362 & 71.74 & 980\\
\hline
\end{tabular}
\end{center}
\caption{MILLIION SONG. Part of results at 30 epochs}
\label{table:MSD30}
\end{table}    
    \subsection{General patterns}
    From the obtained results it follows that the best results are achieved by
    the proposed algorithm with the warm-up of the last model. We believe that the reason for this is
    that, with a weakly trained last model, it is easier for the algorithm to nullify the contribution of the last model and focus on the predictions of well-trained $d$ estimators.
    Although results of star algorithm with snapshot technique are mediocre for high number of $epochs$, it shows quite good results for smaller number of $epochs$
    (see Tables with a small number of $epochs$ in Appendix \ref{Appendix B}).

{\section{Discussion and limitations}
\label{discussion}}
    The proposed algorithm performs well in the classification problem with cross-entropy loss, although this paper only presents a theoretical analysis of regression with squared loss. The practical implementation of the algorithm can be sensitive to the initialization of the last block, as shown by the warm-up method. And experiments on larger models were not carried out due to high computational costs. Some solutions to these problems are offered in Section \ref{competitors}.

    In fact, the star estimator we proposed is a multidimensional analogue of the Audibert's algorithm. It combines optimal orders as a solution to the aggregation problem of model selection, and at the same time behaves like an ensemble method. This decision can be viewed from two sides at once.

\textit{$Star_d$ algorithm as a new learning algorithm for neural networks of block architecture}

     It is worth noting that if we spend a fixed amount of computing resources $B$ for each call to the optimization process $calculate\_erm$, then the total budget of our algorithm is about $(d + 1)\cdot B$.
     But the surprising fact is that the result obtained is able to compete with other methods for training the final large neural network from $d + 1$ blocks, although our theoretical analysis guarantees optimality only in comparison with the best single block architecture model. Thus, the procedure we proposed can be perceived as a new method for training neural networks with block architecture.
     
\textit{$Star_d$ algorithm as a new way of model aggregation}

    Also note that the predictors $\widehat{g}_i$ need not be trained in the first step. 
    Then the $Star_d$ algorithm can be perceived as an algorithm for aggregating these models. It will consist of the following: a new predictive model $f$ is added, a connecting layer, and the process of optimization by a parameter is started.
    At the same time, generally speaking, it is not necessary to have all blocks be of the same architecture.
    As intuition suggests, the main thing is that the expressive abilities of those classes of solutions to which the predictors given to us will relate should be approximately equal. Then it will be possible to formally consider the union of those decision classes to which each of the predictors belongs, and consider them  as $\Delta$-minimizers from the following class $\mathcal{F} =\bigcup\limits_i\mathcal{F}_i$, where given predictors $\widehat{g}_i \in \mathcal{F}_i$ (which may be heterogeneous).

\section{Acknowledgments}

We are grateful to Nikita Puchkin for essential comments and productive discussions, and also to Alexander Trushin for help with the design of the work. The article was prepared within the framework of the HSE University Basic Research Program. 



\printbibliography

\newpage
\appendix
\section{Proofs}\label{Appendix A}
    The combination of the following 2 Lemmas is a generalization of the geometric inequality proved by Liang et al. \cite{liang2015learning}. In many respects the scheme of the proof is similar.
    
    \begin{lemma}{(Geometric inequality for the exact $Star_d$ estimator in the second step)}\\
    \label{geom_ineq_lemma-1}
    Let $\hat{g}_1 \ldots \hat{g}_d$ be $\Delta_1$-empirical risk minimizers from the first step of the $Star_d$ procedure, $\wave{f}$ be the exact minimizer from the second step of the $Star_d$ procedure. Then, for $c_{\ref{geom_ineq_lemma-1}}=\frac{1}{18}$ the following inequality holds:
    \begin{equation}
    \label{geom_ineq_ineq-1}
        \HE (h - Y)^2 - \HE (\wave{f} - Y)^2  \ge c_{\ref{geom_ineq_lemma-1}} \HE (\wave{f} - h)^2 - 2\Delta_1.
    \end{equation}
    \end{lemma}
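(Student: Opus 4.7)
The plan is to extend the Liang--Rakhlin--Sridharan star-shape geometric inequality from the one-dimensional to the $d$-dimensional star. The structural observation I would rely on is that the set $Star_d(\mathcal{F}, \widehat{g}_1, \dots, \widehat{g}_d)$ is star-shaped around every point of the convex hull $\mathcal{C} := \mathrm{conv}(\widehat{g}_1, \dots, \widehat{g}_d)$. Indeed, if $h = \sum_i \mu_i \widehat{g}_i + (1 - \sum_i \mu_i) f_h \in Star_d$ and $g = \sum_i \alpha_i \widehat{g}_i \in \mathcal{C}$, then
$$
(1-t) h + t g \;=\; \sum_i \bigl[(1-t)\mu_i + t\alpha_i\bigr] \widehat{g}_i + (1-t)\bigl(1 - \sum_i \mu_i\bigr) f_h
$$
lies in $Star_d$ for every $t \in [0, 1]$, because $f_h \in \mathcal{F}$ is preserved as the anchor and the convex coefficients remain valid.

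I would then write $\wave{f} = (1 - \nu) g_0 + \nu f_0$ with $g_0 \in \mathcal{C}$ and $f_0 \in \mathcal{F}$, picking $g_0 = \widehat{g}_k$ arbitrarily in the edge case $\wave{f} \in \mathcal{F}$. The quadratic identity
$$
\HE\bigl((1-t) a + t b - Y\bigr)^2 \;=\; (1-t)\HE(a-Y)^2 + t\HE(b-Y)^2 - t(1-t)\HE(a - b)^2,
$$
combined with the minimality of $\wave{f}$ on the segment $(1-t)\wave{f} + t g_0 \in Star_d$, produces in the limit $t \to 0^+$
$$
\HE(g_0 - Y)^2 - \HE(\wave{f} - Y)^2 \;\geq\; \HE(\wave{f} - g_0)^2.
$$
Applying the same identity to the segment $(1-t) h + t g_0 \in Star_d$ yields, for every $t \in [0, 1)$,
$$
\HE(h - Y)^2 - \HE(\wave{f} - Y)^2 \;\geq\; t\,\HE(h - g_0)^2 - \tfrac{t}{1-t}\bigl[\HE(g_0 - Y)^2 - \HE(\wave{f} - Y)^2\bigr].
$$
Together with the elementary bound $\HE(h - g_0)^2 \geq \tfrac12 \HE(h - \wave{f})^2 - \HE(\wave{f} - g_0)^2$, these three inequalities reduce the claim to controlling the "loss gap" $\HE(g_0 - Y)^2 - \HE(\wave{f} - Y)^2$ up to the $\Delta_1$-error.

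The $\Delta_1$ error enters through Jensen's inequality $\HE(g_0 - Y)^2 \leq \sum_i \alpha_i \HE(\widehat{g}_i - Y)^2$ combined with the $\Delta_1$-ERM property $\HE(\widehat{g}_i - Y)^2 \leq \min_{f \in \mathcal{F}} \HE(f - Y)^2 + \Delta_1$. Because $\mathcal{F} \subseteq Star_d$, the gap $\min_\mathcal{F} \HE(f - Y)^2 - \HE(\wave{f} - Y)^2$ is non-negative but not a priori small; this residual is absorbed by a second application of the star-shape inequality with an individual $\widehat{g}_k$ in the role of $h$, and this second pass is the reason why the final bound carries $-2\Delta_1$ on the right-hand side rather than $-\Delta_1$.

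The main obstacle is the joint case-split and the optimization over $t$. When $\HE(\wave{f} - g_0)^2$ is of the same order as $\HE(h - \wave{f})^2$, both the triangle-inequality loss $-\HE(\wave{f}-g_0)^2$ and the $\tfrac{t}{1-t}$-weighted subtracted term threaten to swallow the main gain $\tfrac{t}{2}\HE(h - \wave{f})^2$; balancing these forces an interior choice such as $t \approx 1/3$ and a subcase distinction based on the ratio of $\HE(h - g_0)^2$ to $\HE(\wave{f} - g_0)^2$. The bookkeeping in that step is what ultimately pins down the explicit constant $c_{\ref{geom_ineq_lemma-1}} = 1/18$.
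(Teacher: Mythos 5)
Your proposal is correct, and it takes a genuinely different route from the paper. The paper transplants the geometric argument of Liang et al.\ almost verbatim: it passes to the two-dimensional plane through $\widehat{g}$, $Y$ and $h$, forms the conic hull of the ball $\mathcal{B}(Y,\|\wave{f}-Y\|_n)$ with apex $\widehat{g}$, identifies $\wave{f}$ with a contact point of that cone, and extracts the constant $1/18$ from the cone aperture via the Pythagorean theorem and a projection $h\mapsto h_\perp$, with the $-2\Delta_1$ entering through the case $h\in\mathcal{B}_1'$ where $\langle h_\perp-Y,h_\perp-h\rangle_n\le\Delta_1$. You instead use only the star-shapedness of $Star_d$ around $\mathrm{conv}(\widehat{g}_1,\dots,\widehat{g}_d)$ plus the exact quadratic expansion of the empirical loss along segments, and your ingredients do close: writing $L:=\HE(h-Y)^2-\HE(\wave{f}-Y)^2$, $G:=\HE(g_0-Y)^2-\HE(\wave{f}-Y)^2$ and $Q:=\HE(h-g_0)^2$, the two segment inequalities give $(1-t)L+tG\ge t(1-t)Q$ and $G\ge\HE(\wave{f}-g_0)^2$, while Jensen, the $\Delta_1$-ERM property and $\min_{f\in\mathcal{F}}\HE(f-Y)^2\le\HE(h-Y)^2$ give $G\le L+\Delta_1$; taking $t=1/2$ together with $Q\ge\frac12\HE(h-\wave{f})^2-G$ yields $L\ge\frac{1}{10}\HE(h-\wave{f})^2-\frac{3}{5}\Delta_1$, which is strictly stronger than the claimed $c=1/18$ and $-2\Delta_1$ (no delicate optimization over $t$ is needed; $t=1/2$ suffices). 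The one spot where your sketch wobbles is the treatment of the residual $\min_{\mathcal{F}}\HE(f-Y)^2-\HE(\wave{f}-Y)^2$: a ``second application with $\widehat{g}_k$ in the role of $h$'' does not obviously control it, but nothing needs to control it --- since $h\in\mathcal{F}$ this residual is at most $L$ and is simply absorbed into the left-hand side, which is where the self-referential $\frac54L$ step comes from. What the paper's geometric route buys is that the cone $\mathcal{C}$, the set $\mathcal{T}$ and the contact-point set $K$ built here are reused directly in the proof of the follow-up Lemma \ref{geom_ineq} for the $\Delta_2$-approximate second-stage minimizer; what your route buys is a shorter, coordinate-free argument with better constants that never needs the two-dimensional reduction or the projection $h_\perp$.
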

\begin{proof}
    For any function $f,\,g$ we denote the empirical $\ell_2$ distance to be $\|f\|_n := \left[\HE f^2\right]^\frac{1}{2}$, empirical product to be $\langle f, g \rangle_n := \HE{[fg]}$ and the square of the empirical distance between $\mathcal{F}$ and $Y$ as
    $r_1$. 
    By definition of $Star_d$ estimator for some $\lambda \in [0;1]$ we have:
    \begin{align*}
        \wave{f} = (1 - \lambda) \widehat{g} + \lambda f,
    \end{align*}    
    where $\widehat{g}$ lies in a convex hull of $\Delta_1$-empirical risk minimizers $\{\widehat{g}_i\}_{i=1}^d$.  
    Denote the balls centered at $Y$ to be $\mathcal{B}_1 := \mathcal{B}(Y, \sqrt{r_1})$, $\mathcal{B}_1' := \mathcal{B}(Y, \|\widehat{g}-Y\|_n)$ and $\mathcal{B}_2 := \mathcal{B}(Y, \|\wave{f} - Y\|_n)$. The corresponding spheres will be called $\mathcal{S}_1, \mathcal{S}_1', \mathcal{S}_2$. We have $\mathcal{B}_2 \subseteq \mathcal{B}_1$ and $\mathcal{B}_2 \subseteq \mathcal{B}_1'$.
    Denote by $\mathcal{C}$ the conic hull of $\mathcal{B}_2$ with origin $\widehat{g}$ and define the spherical cap outside the cone $\mathcal{C}$ to be $\mathcal{S} = \mathcal{S}'_1 \setminus \mathcal{C}$.
    
    First, $\wave{f} \in \mathcal{B}_2$ and it is a contact point of $\mathcal{C}$ and $\mathcal{S}_2$. Indeed, $\wave{f}$ is necessarily on a line segment between $\hat{g}$ and a point outside $\mathcal{B}_1$ that does not pass through the interior of $\mathcal{B}_2$ by optimality of $\wave{f}$. Let $K$ be the set of all contact points of $\mathcal{C}$ and $\mathcal{S}_2$ -- potential locations of $\wave{f}$.
    
    Second, for any $h \in \mathcal{F}$, we have $\|h - Y\|_n \ge \sqrt{r_1}$ i.e. any $h \in \mathcal{F}$ is not in the interior of $\mathcal{B}_1$. Furthermore, let $\mathcal{C}'$ be bounded subset cone $\mathcal{C}$ cut at $K$. Thus $h \in (int \mathcal{C})^c \cap (\mathcal{B}_1)^c$ or $h \in \mathcal{T}$, where $\mathcal{T} := (int\mathcal{C}') \cap (\mathcal{B}_1)^c$.
    
    For any  $h \in \mathcal{F}$ consider the two dimensional plane $\mathcal{L}$ that passes through three points $\hat{g},\, Y,\, h$, depicted in Figure \ref{fig:geom_ineq_picture}. Observe that the left-hand side of the desired inequality \eqref{geom_ineq_ineq-1} is constant as $\wave{f}$ ranges over $K$. The maximization of $\|h - f'\|^2_n$ over $f' \in K$ is achieved by $f' \in K \cap \mathcal{L}$. 
    Hence, to prove the desired inequality, we can restrict our attention to the plane $\mathcal{L}$ and $f'$.
    Let $h_\perp$ be the projection of $h$ onto the shell $L \cap S_1'$.
    By the geometry of the cone and triangle inequality we have:
    $$
        \|f' - \widehat{g}\|_n \ge \frac{1}{2}\|\widehat{g} - h_\perp\|_n \ge \frac{1}{2}\left(\|f' - h_\perp\|_n - \|f' - \widehat{g}\|_n\right),
    $$
    and, hence, $\|f' - \widehat{g}\|_n \geq \|f' - h_\perp\|_n / 3$.
    By the Pythagorean theorem,
    $$
        \|h_\perp - Y\|^2_n - \|f' - Y\|^2_n = \|\widehat{g} - Y\|^2_n - \|f' - Y\|^2_n = \|f' - \widehat{g}\|_n^2 \ge \frac{1}{9}\|f' - h_\perp\|^2_n.
    $$
    We can now extend this claim to $h$. Indeed, due to the geometry of the projection $h \to h_\perp$ and the fact that $h \in (int \mathcal{C})^c \cap (int\mathcal{B}_1)^c$ or $h \in \mathcal{T}$  there are 2 possibilities:
    
    a) $h \in (\mathcal{B}'_1)^c$. Then $\langle h_\perp - Y, h_\perp - h\rangle_n \le 0$;
    
    b) $h \in \mathcal{B}'_1$. Then, since $h \in (\mathcal{B}_1)^c$, we have
    $$
        \langle h_\perp - Y, h_\perp - h\rangle_n \leq \big(\|h-Y\|+\|h-h_\perp\|\big)\|h-h_\perp\| \le \|h_\perp - Y\|^2_n - \|h-Y\|^2_n \leq \Delta_1.
    $$
    In both cases, the following inequality is true
        \begin{align*}
            \|h - Y\|_n^2 - \|f' - Y\|_n^2
            &= \|h_\perp - h\|_n^2 - 2\langle h_\perp - Y, h_\perp - h\rangle_n + (\|h_\perp - Y\|_n^2 - \|f' - Y\|_n^2)\\
            &\ge \|h_\perp - h\|_n^2 - 2\Delta_1 + \frac{1}{9}\|f' - Y\|_n^2 \ge \frac{1}{18}\|f' - h\|_n^2 - 2\Delta_1.
        \end{align*}
\end{proof}

\begin{figure}[ht!]
    \centering
    \includegraphics[scale=1]{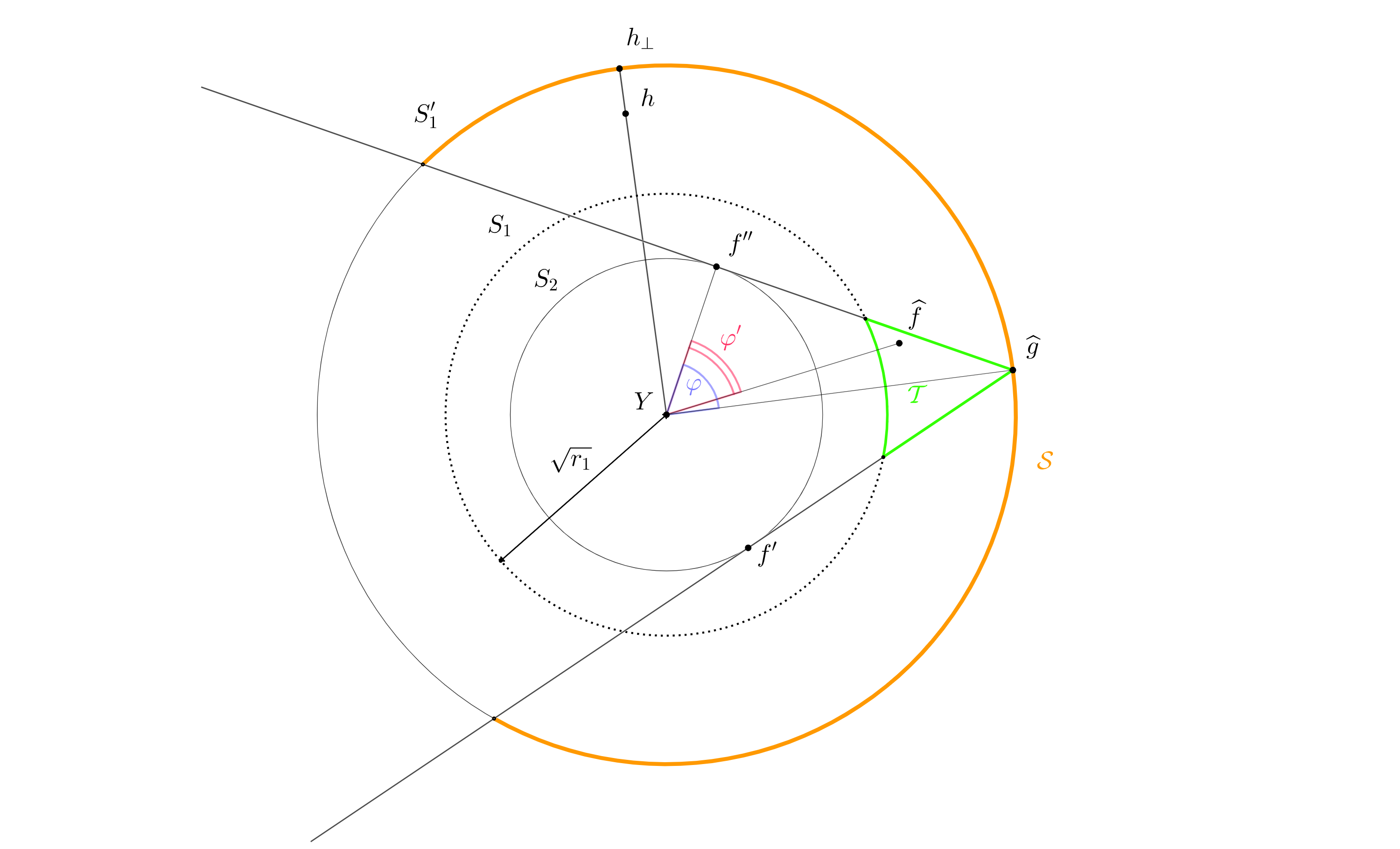}
    \caption{The cut surface $\mathcal{L}$}
    \label{fig:geom_ineq_picture}
\end{figure}

\begin{lemma}[Geometric Inequality for $\Delta$-empirical minimizers]
    \label{geom_ineq}
    Let $\hat{g}_1 \ldots \hat{g}_d$ be $\Delta_1$-empirical risk minimizers from the first step of the $Star_d$ procedure, and $\widehat{f}$ be the $\Delta_2$-empirical risk minimizer from the second step of the $Star_d$ procedure. Then, for any $h \in \mathcal{F}$ and $c_{\ref{geom_ineq}} = \frac{1}{36}$ the following inequality holds:
    \begin{equation*}
        \widehat{\E} (h - Y)^2 - \widehat{\E}(\widehat{f} - Y)^2  \ge c_{\ref{geom_ineq}} \widehat{E}(\widehat{f} - h)^2 - 2(1+c_{\ref{geom_ineq}})[\Delta_1 +\Delta_2].
    \end{equation*}
\end{lemma}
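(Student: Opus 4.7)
The plan is to imitate the proof of Lemma~\ref{geom_ineq_lemma-1} with one extra geometric ingredient: the orthogonal projection $p$ of $Y$ (in $\|\cdot\|_n$) onto the line through the $Star_d$-decomposition of the approximate minimizer $\widehat{f}$. Writing $\widehat{f}=(1-\lambda)\widehat{g}+\lambda f$ with $\widehat{g}\in\mathrm{conv}\{\widehat{g}_i\}$ and $f\in\mathcal{F}$, I would let $p$ be the projection of $Y$ onto the line through $\widehat{g}$ and $f$ (handling the degenerate case where the foot falls outside the segment $[\widehat{g},f]$ separately, as in Lemma~\ref{geom_ineq_lemma-1}). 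Since $p\in Star_d$, the $\Delta_2$-ERM property of $\widehat{f}$ gives $\|\widehat{f}-Y\|_n^2\le\|p-Y\|_n^2+\Delta_2$; combined with the Pythagorean identity $\|\widehat{f}-Y\|_n^2=\|p-Y\|_n^2+\|\widehat{f}-p\|_n^2$ (valid because $\widehat{f}$ lies on the line whose foot of perpendicular from $Y$ is $p$), this forces
\begin{equation*}
    \|\widehat{f}-p\|_n^2\le\Delta_2,
\end{equation*}
so $\widehat{f}$ is $\sqrt{\Delta_2}$-close to a point $p$ to which the geometry of Lemma~\ref{geom_ineq_lemma-1} applies.

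With $p$ taking the role of $\wave{f}$, I would rerun the cone-and-sphere analysis: form the cone $\mathcal{C}$ with apex $\widehat{g}$ and base $\mathcal{B}(Y,\|p-Y\|_n)$; note that $p$ is a contact point of this cone with its sphere (by the orthogonality characterizing projections); and, for each $h\in\mathcal{F}$, pick the in-plane contact point $f'\in K_p\cap\mathcal{L}$ maximizing $\|h-f'\|_n\ge\|h-p\|_n$. The Pythagorean step and the two-case split ($h\in(\mathrm{int}\,\mathcal{C})^c\cap(\mathcal{B}_1)^c$ versus $h\in\mathcal{T}$) carry over verbatim, so that the $\Delta_1$-ERM property of the $\widehat{g}_i$'s again contributes the additive $-2\Delta_1$, yielding
\begin{equation*}
    \HE(h-Y)^2-\HE(p-Y)^2\ge c_{\ref{geom_ineq_lemma-1}}\|p-h\|_n^2-2\Delta_1.
\end{equation*}

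Transferring this inequality from $p$ to $\widehat{f}$ is then an elementary two-step affair. From $\HE(p-Y)^2\ge\HE(\widehat{f}-Y)^2-\Delta_2$ and the reverse triangle inequality $\|p-h\|_n^2\ge\tfrac12\|\widehat{f}-h\|_n^2-\|\widehat{f}-p\|_n^2\ge\tfrac12\|\widehat{f}-h\|_n^2-\Delta_2$ I would obtain
\begin{equation*}
    \HE(h-Y)^2-\HE(\widehat{f}-Y)^2\ge\tfrac{c_{\ref{geom_ineq_lemma-1}}}{2}\|\widehat{f}-h\|_n^2-2\Delta_1-(1+c_{\ref{geom_ineq_lemma-1}})\Delta_2.
\end{equation*}
Since $c_{\ref{geom_ineq}}=c_{\ref{geom_ineq_lemma-1}}/2=1/36$, one checks numerically that $2\Delta_1+(1+c_{\ref{geom_ineq_lemma-1}})\Delta_2=2\Delta_1+\tfrac{19}{18}\Delta_2\le\tfrac{37}{18}(\Delta_1+\Delta_2)=2(1+c_{\ref{geom_ineq}})[\Delta_1+\Delta_2]$, and this is precisely the claimed bound.

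The main obstacle I expect is justifying the geometric transfer in the middle step: one has to verify that the in-plane case analysis and Pythagorean argument of Lemma~\ref{geom_ineq_lemma-1} depend only on $p$ being the orthogonal projection of $Y$ onto its line together with the $\Delta_1$-ERM property of the $\widehat{g}_i$'s, and \emph{not} on any global optimality of the centre over all of $Star_d$. Because $Star_d$ is not convex there is no parallelogram-law shortcut for bounding $\|\widehat{f}-\wave{f}\|_n$ directly, so the route via the projection $p$ --- which is only locally optimal on the single segment $[\widehat{g},f]$, yet automatically a contact point of its own cone-and-sphere configuration --- is the key workaround that lets the factor $\tfrac12$ in the reverse triangle inequality account for the full degradation of the constant from $c_{\ref{geom_ineq_lemma-1}}$ to $c_{\ref{geom_ineq}}$.
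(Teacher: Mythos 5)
Your reduction $\|\widehat{f}-p\|_n^2\le\Delta_2$ via the projection $p$ and the Pythagorean identity is fine, but the step you yourself flag as the ``main obstacle'' is a genuine gap, and it cannot be closed as stated: the intermediate inequality $\HE(h-Y)^2-\HE(p-Y)^2\ge c_{\ref{geom_ineq_lemma-1}}\|p-h\|_n^2-2\Delta_1$ is false in general. The two-case split in Lemma \ref{geom_ineq_lemma-1} ($h\in(\mathrm{int}\,\mathcal{C})^c\cap(\mathcal{B}_1)^c$ versus $h\in\mathcal{T}$) tacitly excludes a third possibility, namely $h$ lying in the interior of the cone \emph{beyond} the contact set $K$, and that exclusion is precisely where the global optimality of $\wave{f}$ over $Star_d$ is used: if the segment $[\widehat{g},h]$ entered the interior of $\mathcal{B}_2$ it would contain a point of $Star_d$ strictly closer to $Y$ than $\wave{f}$. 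Your $p$ is optimal only along the single line through $\widehat{g}$ and $f$, so for another $h\in\mathcal{F}$ the segment $[\widehat{g},h]$ may dip into $\mathcal{B}(Y,\|p-Y\|_n)$ by a squared depth as large as $\Delta_2$, placing $h$ deep inside the cone where the spherical-cap geometry gives nothing. Concretely, in $\mathbb{R}^2$ with $\|\cdot\|_n$ Euclidean: $Y=0$, $d=1$, $\mathcal{F}=\{\widehat{g},f,h\}$ with $\widehat{g}=(0.95,\,0.312)$, $f=(0.95,\,-0.312)$, $h=(-1,0)$ all on the unit circle (so $\Delta_1=0$, $\widehat{g}_1=\widehat{g}$), and $\widehat{f}=p=(0.95,0)=\tfrac12\widehat{g}+\tfrac12 f$, which is a $\Delta_2$-ERM for $\Delta_2\approx 0.88$ since the segment $[\widehat{g},h]$ passes within distance $0.16$ of $Y$. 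Then $\HE(h-Y)^2-\HE(p-Y)^2=1-0.9025=0.0975$ while $c_{\ref{geom_ineq_lemma-1}}\|p-h\|_n^2=(1.95)^2/18\approx 0.211$, so your displayed inequality for $p$ fails (the conclusion of Lemma \ref{geom_ineq}, which carries the $-2(1+c_{\ref{geom_ineq}})\Delta_2$ penalty, still holds).

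The paper's proof avoids this by never moving the centre of the ball: it keeps the exact minimizer's configuration ($\mathcal{B}_2=\mathcal{B}(Y,\|\wave{f}-Y\|_n)$ and its contact set $K$, for which Lemma \ref{geom_ineq_lemma-1} is valid at every $f''\in K$), takes $f''$ to be the contact point nearest to $\widehat{f}$, and bounds $\|\widehat{f}-f''\|_n^2\le\Delta_1+\Delta_2$ by a case analysis on where $\widehat{f}$ itself sits relative to that cone; the transfer from $f''$ to $\widehat{f}$ by the triangle inequality then halves the constant exactly as in your last display, and the $\Delta_2$ correctly enters the additive error. To salvage your route you would have to add and control the third case ($h$ interior to the cone beyond $K_p$, with the dip quantified by $\Delta_2$), which in effect means redoing the argument with the globally optimal radius --- i.e., the paper's proof.
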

\begin{proof}
    Since Lemma \ref{geom_ineq_lemma-1} was actually proven for any $f \in K$, let $f''$ be the closest point to $\widehat{f}$ from $K$.      
    For this $f''$ the inequality \eqref{geom_ineq_ineq-1} holds. Similarly to Lemma \ref{geom_ineq_lemma-1}, there are 2 options: either $\widehat{f} \in (int\mathcal{C})^c$, or $\widehat{f} \in \mathcal{T}$.
    
    a) Let $\widehat{f} \in (int\mathcal{C})^c$, then $\langle \widehat{f} - f'', f''-Y \rangle \geq 0$.
    Since $\widehat{f}$ is $\Delta_2$-empirical risk minimizer, we have  $ \|\widehat{f} - f''\|^2_n + 2 \langle \widehat{f} - f'', f''-Y \rangle +\|f''-Y\|^2_n =\|\widehat{f}- Y \|^2_n  \leq \|f''-Y\|^2_n + \Delta_2$. It means, that $\|\widehat{f}-f''\|^2_n \leq \Delta_2$.
    
    b) Let $\widehat{f} \in \mathcal{T}$, then by the cosine theorem (as depicted on Figure \ref{fig:geom_ineq_picture},  $\mathcal{L}$ is the two dimensional plane which passes through $\widehat{f},\,\widehat{g},\,Y$): $$
    \| \widehat{f} - f''\|^2_n = \|f''- Y \|^2_n + \| \widehat{f} - Y\|^2_n - 2 \|f''-Y\|_n \| \widehat{f}-Y\|_n \cos(\varphi').$$
    But $\cos(\varphi') \geq \cos(\varphi) = \frac{\|f''-Y\|_n}{\|\widehat{g}-Y \|_n}$ and $\|\widehat{f}-Y\|^2_n \geq r_1$.
    Then we have:
    \begin{align*}
        \|\widehat{f}-f''\|^2_n &\leq \Delta_2 + 2 \| f''- Y\|^2_n \left(1-\frac{\| \widehat{f}-Y\|_n}{\|\widehat{g}-Y \|_n}\right) \\
        &\leq \Delta_2 + 2\frac{ \| f''- Y\|^2_n}{\|\widehat{g}-Y \|_n}\left(\frac{\|\widehat{g}-Y \|^2_n -  \| \widehat{f}-Y\|^2_n}{\|\widehat{g}-Y \|_n+\| \widehat{f}-Y\|_n}\right) \leq \Delta_1 + \Delta_2.
    \end{align*}
    
    Lemma \ref{geom_ineq_lemma-1} states:
    $$
        \|h - Y\|_n^2 \ge \|f'' - Y\|_n^2 + c_{\ref{geom_ineq_lemma-1}} \|f'' - h\|_n^2 - 2\Delta_1.
    $$
    By using the triangle inequality and the convexity of the quadratic function,
    we can get the following bound
    \begin{align*}
        \frac{c_{\ref{geom_ineq_lemma-1}}}{2} \|\widehat{f} - h\|^2_n  \le c_{\ref{geom_ineq_lemma-1}}\left(\|\widehat{f} - f''\|_n^2 + \|f'' - h\|_n^2\right) \le  c_{\ref{geom_ineq_lemma-1}}[\Delta_2+\Delta_1]+ c_{\ref{geom_ineq_lemma-1}}\|f'' - h\|_n^2.
    \end{align*}
    Combining everything together,  we get the required result for the constant $c_{\ref{geom_ineq}}=\frac{c_{\ref{geom_ineq_lemma-1}}}{2}=\frac{1}{36}$:
    $$
        \widehat{\E} (h - Y)^2 - \widehat{\E}(\widehat{f} - Y)^2  \ge c_{\ref{geom_ineq}} \cdot \widehat{\E}(\widehat{f} - h)^2 - 2(1+c_{\ref{geom_ineq}})[\Delta_1 + \Delta_2].
    $$
\end{proof}

    For convenience, we introduce a $\Delta$-excess risk
    $$ 
    \mathcal{E}_{\Delta}(\widehat{g}) := \E(\widehat{g} - Y)^2 - 
    \inf_{f \in \mathcal{F}}\E(f - Y)^2 - 2(1+c_{\ref{geom_ineq}})[\Delta_1 + \Delta_2],
    $$
    then the following 2 statements are the direct consequences of the corresponding statements from the article \cite{liang2015learning}. The only difference is that in our case the geometric inequality has terms on the right side with minimization errors $\Delta_1, \Delta_2$. Also our definition of the set $\mathcal{H}$ is different, but all that was needed from it was the property that $\widehat{f}$ lies in $\mathcal{H}+f^*$. For brevity, we will not repeat the proofs, but only indicate the numbers of the corresponding results in the titles of the assertions. We will also proceed for statements the proofs for which we slightly modify or use without changes.

        \begin{corollary}[Corollary 3]
        Conditioned on the data $\{({\textbf{X}_i}, Y_i) : 1 \leq i \leq n\}$, we have a deterministic upper bound for the $Star_d$ estimator:
        $$
                \mathcal{E}_{\Delta}(\widehat{f}) \le (\widehat \E - \E) [2(f^* - Y)(f^* - \widehat{f})] + \E (f^* -     \widehat{f})^2 - (1+c_{\ref{geom_ineq}}) \cdot \widehat{\E} (f^* - \widehat{f})^2.
        $$

    \end{corollary}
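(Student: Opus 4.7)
The plan is to apply the geometric inequality of Lemma \ref{geom_ineq} with the test function $h = f^* \in \mathcal{F}$ and then convert everything into population quantities by adding and subtracting the relevant empirical expectations. The key algebraic identity used throughout is
$$
(u - Y)^2 - (v - Y)^2 = (u - v)^2 + 2(v - Y)(u - v),
$$
which I would apply both under $\E$ and under $\widehat{\E}$ with $u = \widehat{f}$ and $v = f^*$.

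First I would start from the definition of the $\Delta$-excess risk and use the identity under $\E$ to write
$$
\mathcal{E}_\Delta(\widehat{f}) = \E(\widehat{f} - f^*)^2 + 2\,\E\bigl[(f^* - Y)(\widehat{f} - f^*)\bigr] - 2(1+c_{\ref{geom_ineq}})[\Delta_1 + \Delta_2].
$$
Next I would invoke Lemma \ref{geom_ineq} at $h = f^*$, getting $\widehat{\E}(f^* - Y)^2 - \widehat{\E}(\widehat{f} - Y)^2 \ge c_{\ref{geom_ineq}}\,\widehat{\E}(\widehat{f} - f^*)^2 - 2(1+c_{\ref{geom_ineq}})[\Delta_1 + \Delta_2]$, and expand the left-hand side by the same identity (with $\widehat{\E}$ in place of $\E$) to obtain, after rearrangement,
$$
2\,\widehat{\E}\bigl[(f^* - Y)(\widehat{f} - f^*)\bigr] \le -(1+c_{\ref{geom_ineq}})\,\widehat{\E}(\widehat{f} - f^*)^2 + 2(1+c_{\ref{geom_ineq}})[\Delta_1 + \Delta_2].
$$

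Finally I would split the empirical-process term via
$$
2\,\E\bigl[(f^* - Y)(\widehat{f} - f^*)\bigr] = (\E - \widehat{\E})\bigl[2(f^* - Y)(\widehat{f} - f^*)\bigr] + 2\,\widehat{\E}\bigl[(f^* - Y)(\widehat{f} - f^*)\bigr],
$$
substitute the bound from the previous step, and note the sign flip $(\E - \widehat{\E})[2(f^* - Y)(\widehat{f} - f^*)] = (\widehat{\E} - \E)[2(f^* - Y)(f^* - \widehat{f})]$. Inserting this into the expansion of $\mathcal{E}_\Delta(\widehat{f})$ makes the two $2(1+c_{\ref{geom_ineq}})[\Delta_1+\Delta_2]$ terms cancel exactly, leaving the claimed inequality. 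The argument is essentially algebraic; the only non-routine point is the bookkeeping that ensures the $\Delta$-slack from Lemma \ref{geom_ineq} is precisely the amount absorbed into the definition of $\mathcal{E}_\Delta$, so that no residual minimization-error terms survive on the right-hand side. This mirrors Corollary 3 of \cite{liang2015learning}, with the geometric inequality now being the $\Delta$-version of Lemma \ref{geom_ineq} rather than its exact counterpart.
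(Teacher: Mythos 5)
Your proof is correct and follows essentially the same route as the paper, which simply defers to Corollary 3 of Liang et al.\ and notes that the only change is the $\Delta$-slack from Lemma \ref{geom_ineq} being absorbed into the definition of $\mathcal{E}_\Delta$ --- precisely the bookkeeping you carry out explicitly (applying the geometric inequality at $h=f^*\in\mathcal{F}$ and splitting the cross term into empirical and population parts). No gaps.
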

    \begin{theorem}[Theorem 4]
    \label{loss_expectation}
        The following expectation bound on excess loss of the $Star_d$ estimator holds:
        $$
            \E \mathcal{E}_{\Delta}(\widehat{f}) \le (2F' + F(2+c_{\ref{geom_ineq}})/2) \cdot {\E}_{\sigma} \sup_{h \in \mathcal{H}} \left\{\frac{1}{n}\sum_{i=1}^{n} 2\sigma_i h({\textbf{X}_i}) - c_{\ref{loss_expectation}}h({\textbf{X}_i})^2\right\},
        $$
        where $\sigma_1, \ldots \sigma_n$ are independent Rademacher random variables, $c_{\ref{loss_expectation}} = \min\left\{\frac{c_{\ref{geom_ineq}}}{4F'}, \frac{c_{\ref{geom_ineq}}}{4F(2+c_{\ref{geom_ineq}})}\right\}$, $F = \sup_{f \in \mathcal{F}} |f|_\infty$ and $F' = \sup_\mathcal{F} |Y - f|_\infty$ almost surely.
    \end{theorem}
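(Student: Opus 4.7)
I adapt the offset Rademacher complexity argument of Theorem~4 in \cite{liang2015learning} to the present setting, starting from the deterministic inequality supplied by Corollary~3 above. Setting $h := \widehat{f} - f^*$, which lies in $\mathcal{H}$ because $\widehat{f} \in Star_d \subseteq \mathcal{H} + f^*$, Corollary~3 can be rewritten as
\[
    \mathcal{E}_\Delta(\widehat{f}) \le (\HE - \E)\bigl[-2(f^* - Y)\,h\bigr] \;-\; (\HE - \E)\,h^2 \;-\; c_{\ref{geom_ineq}}\,\HE\,h^2.
\]
Because the right-hand side depends on the sample only through the (data-dependent) realization of $h$, I upper-bound it by the supremum over all $h \in \mathcal{H}$ and take expectation over the sample, reducing the theorem to control of an expected uniform deviation with a negative empirical quadratic offset.

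Next, I symmetrize both centered empirical processes through an independent ghost sample and Rademacher signs. For the linear term, the Ledoux--Talagrand contraction principle, applied with Lipschitz constant $|2(f^* - Y)|_\infty \le 2F'$, converts the Rademacher average of $(f^* - Y)\,h$ into $2F'$ times a Rademacher average of $h$. For the quadratic term $-(\HE - \E)\,h^2$, functions in $\mathcal{H}$ are uniformly bounded in terms of $F$, so the map $t\mapsto t^2$ is Lipschitz on the relevant range; after rewriting the quadratic deviation in a convenient form, contraction replaces the Rademacher average of $h^2$ by a Rademacher average of $h$ scaled by $F(2+c_{\ref{geom_ineq}})/2$.

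The last step is to split the empirical offset $c_{\ref{geom_ineq}}\,\HE\,h^2$ into two summands and pair each with one of the two Rademacher contributions above, so that each pairing factors as
\[
    (\text{prefactor}) \cdot \E_\sigma \sup_{h \in \mathcal{H}}\Bigl\{\tfrac{1}{n}\sum_{i=1}^n 2\sigma_i h(\textbf{X}_i) - c_{\ref{loss_expectation}}\,h(\textbf{X}_i)^2\Bigr\}.
\]
The choice $c_{\ref{loss_expectation}} = \min\{c_{\ref{geom_ineq}}/(4F'),\,c_{\ref{geom_ineq}}/(4F(2+c_{\ref{geom_ineq}}))\}$ is precisely the one that equalizes the offsets inside the two sups, and summing the two prefactors produces $2F' + F(2+c_{\ref{geom_ineq}})/2$, as announced. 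The main technical obstacle is the careful constant bookkeeping through the symmetrization and contraction steps and the split of the empirical offset; the $\Delta_1,\Delta_2$ corrections do not enter here because they have already been absorbed into $\mathcal{E}_\Delta$ upstream, so the remainder of the argument is a direct transcription of the Liang--Rakhlin--Sridharan calculation with $\mathcal{H}$ in place of their star-shaped class.
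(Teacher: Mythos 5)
Your proposal is correct and follows essentially the same route as the paper, whose own "proof" of this theorem is simply a deferral to Theorem~4 of Liang--Rakhlin--Sridharan with the two modifications you identify: the $\Delta_1,\Delta_2$ corrections are already absorbed into $\mathcal{E}_\Delta$ via the modified geometric inequality (Lemma~\ref{geom_ineq}), and the class is replaced by $\mathcal{H}$, of which only the property $\widehat{f}\in\mathcal{H}+f^*$ is used. Your rewriting of Corollary~3 in terms of $h=\widehat{f}-f^*$ and the subsequent symmetrization--contraction--offset-splitting pipeline is exactly the calculation the paper points to.
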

    
    \begin{theorem}[Theorem 7]
    \label{loss_probability}
        Assume the lower isometry bound in Definition \ref{isom_bound} holds with $\eta_{lib} = c_{\ref{geom_ineq}}/4$ and some $\delta_{lib} < 1$ and $\mathcal{H}$ is the set defined in \ref{set_H}. Let $\xi_i = Y_i - f^*({\textbf{X}_i})$.
        Define
        $$
            A := \sup_{h \in \mathcal{H}} \frac{\E h^4}{(\E h^2)^2} \text{ and } B := \sup_{\textbf{X}, Y}\E\xi^4.
        $$
        Then there exist two absolute constants $c_{\ref{loss_probability}}' , \tilde{c_{\ref{loss_probability}}} > 0$ (which only depend on $c_{\ref{geom_ineq}}$), such that
        $$
            \Prob\left(\mathcal{E}_{\Delta}(\widehat{f}) > 4u\right) \le 4 \delta_{lib} + 4 \Prob\left(\sup_{h \in \mathcal{H}} \frac{1}{n} \sum_{i = 1}^{n} \sigma_i \xi_i h({\textbf{X}_i}) - \tilde{c_{\ref{loss_probability}}} h({\textbf{X}_i})^2 > u\right)
        $$
        for any
        $$
            u > \frac{32\sqrt{AB}}{c_{\ref{loss_probability}}'} \frac{1}{n}
        $$
        as long as $n > \frac{16(1-c_{\ref{loss_probability}}')^2 A}{c_{\ref{loss_probability}}'^2} \lor n_0(\mathcal{H}, \delta_{lib}, c_{\ref{geom_ineq}}/4)$.\\
    \end{theorem}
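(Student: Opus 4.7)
The plan is to follow the proof of Theorem 7 in \cite{liang2015learning} essentially verbatim, since the only three ingredients needed there are already available in our setting: the deterministic reduction of Corollary 3, the membership $\widehat{f} - f^* \in \mathcal{H}$, and the lower isometry bound for $\mathcal{H}$. The shift $\mathcal{E} \to \mathcal{E}_{\Delta}$ is merely a deterministic additive constant in the excess-risk bound and does not touch the probabilistic machinery.

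First, I would set $h := \widehat{f} - f^* \in \mathcal{H}$ and invoke Corollary 3 to get the deterministic inequality
$$\mathcal{E}_{\Delta}(\widehat{f}) \leq 2(\widehat{\E} - \E)[\xi h] + \E h^2 - (1+c_{\ref{geom_ineq}})\,\widehat{\E} h^2.$$
Conditioning on the lower isometry event $\Omega_{lib}$, which by Definition~\ref{isom_bound} has probability at least $1-\delta_{lib}$ as soon as $n \geq n_0(\mathcal{H},\delta_{lib},c_{\ref{geom_ineq}}/4)$, we have $\widehat{\E} h^2 \geq (1-\eta_{lib})\E h^2$ uniformly in $h$. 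Splitting $(1+c_{\ref{geom_ineq}})\widehat{\E} h^2 = \tfrac{1}{1-\eta_{lib}}\,\widehat{\E} h^2 + \beta\,\widehat{\E} h^2$ with $\beta := 1+c_{\ref{geom_ineq}} - \tfrac{1}{1-\eta_{lib}}$, the first piece exactly absorbs the $\E h^2$ term, and the choice $\eta_{lib} = c_{\ref{geom_ineq}}/4$ is precisely what guarantees $\beta > 0$. On $\Omega_{lib}$ we are thus left with
$$\mathcal{E}_{\Delta}(\widehat{f}) \leq 2(\widehat{\E} - \E)[\xi h] - \beta\,\widehat{\E} h^2.$$

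Next I would apply the offset symmetrization argument from \cite{liang2015learning}: introduce an independent ghost sample $(\mathbf{X}_i', Y_i')$ and Rademacher signs $\sigma_i$, replace $(\widehat{\E}-\E)[\xi h]$ by the symmetrized process $\tfrac{1}{n}\sum_i \sigma_i \xi_i h(\mathbf{X}_i)$, and transport the empirical penalty $\widehat{\E} h^2$ onto the symmetrized side as the offset $\tilde{c_{\ref{loss_probability}}}\,h(\mathbf{X}_i)^2$. The one-sided form of symmetrization requires that the ghost deviation $\{\sup_h |\widehat{\E}'[\xi h] - \E[\xi h]| \leq u/2\}$ holds with probability at least $1/2$; this I would establish by a Chebyshev bound using $\mathrm{Var}(\xi h) \leq \sqrt{\E\xi^4 \cdot \E h^4} \leq \sqrt{AB}\,\E h^2$, which forces exactly the threshold $u > 32\sqrt{AB}/(c_{\ref{loss_probability}}'\,n)$, together with a complementary variance comparison that forces the sample-size condition $n > 16(1-c_{\ref{loss_probability}}')^2 A / c_{\ref{loss_probability}}'^2$. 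The four factors of $2$ that accumulate (bad isometry event, bad ghost event, one-sided symmetrization, Rademacher replacement) combine into the numerical $4$'s multiplying $\delta_{lib}$ and $\Prob(\cdots)$ and the rescaling $u \to 4u$ on the left.

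The main obstacle is this offset symmetrization step: the penalty $\widehat{\E} h^2$ must be preserved through the argument so that it emerges as an $h(\mathbf{X}_i)^2$ offset in the Rademacher process, rather than an uncontrolled $\xi_i^2 h(\mathbf{X}_i)^2$ offset which would break the downstream chaining bound. This works because the penalty is inherited from the deterministic excess-risk inequality rather than produced by the variance of $\xi h$ during symmetrization, so it passes through unchanged up to the constant $\tilde{c_{\ref{loss_probability}}}$; the Chebyshev step that isolates the ghost-sample contribution from $\mathrm{Var}(\xi h)$ is precisely what pins down both conditions on $n$ and $u$ in terms of $A$ and $B$.
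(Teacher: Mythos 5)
Your proposal is correct and follows essentially the same route as the paper: the paper does not reproduce this proof at all, but simply declares the statement a direct consequence of Theorem 7 of Liang et al., noting (as you do) that the only changes are the $\Delta$-terms absorbed into $\mathcal{E}_{\Delta}$ and the new definition of $\mathcal{H}$, of which only the property $\widehat{f}\in\mathcal{H}+f^*$ is used. Your sketch of that underlying argument (Corollary 3, the lower-isometry absorption of $\E h^2$ with $\beta=1+c_{\ref{geom_ineq}}-\tfrac{1}{1-\eta_{lib}}>0$, and the probabilistic offset symmetrization via Chebyshev yielding the constraints on $u$ and $n$) is faithful to the original.
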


\begin{lemma}[Lemma 15]
\label{bound_complexity}
    The offset Rademacher complexity for $\mathcal{H}$ is bounded as:
    $$
        \mathbb{E}_{\sigma}\sup_\mathcal{H}\left\{\frac{1}{n}\sum_{i=1}^n 2\sigma_i \xi_i h({\textbf{X}_i}) - Ch({\textbf{X}_i})^2\right\} \le K(C)\varepsilon + M(C)\cdot \frac{\log \mathcal{N}_2(\mathcal{H}, \varepsilon)}{n}
    $$
    and with probability at least $1 - \delta$
    $$
        \sup_\mathcal{H} \left\{\frac{1}{n}\sum_{i=1}^n 2\sigma_i \xi_i h({\textbf{X}_i}) - Ch({\textbf{X}_i})^2\right\} \le K(C)\varepsilon + M(C) \cdot \frac{\log \mathcal{N}_2(\mathcal{H}, \varepsilon) + \log \frac{1}{\delta}}{n},
    $$
    where  
    \begin{equation}
    \label{K,M}
        K(C):=2\left(\sqrt{\sum_{i=1}^n {\xi^2/n}}+C\right), \,\,\,
        M(C):= \sup_{h \in \mathcal{H} \setminus \{0\}}4\frac{\sum_{i=1}^{n}h({\textbf{X}_i})^2\xi_i^2}{C \sum_{i=1}^{n}h({\textbf{X}_i})^2}.
    \end{equation}
\end{lemma}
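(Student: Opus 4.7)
The plan is to give the standard one-step covering proof of the offset Rademacher complexity bound (essentially the proof of Lemma 15 in \cite{liang2015learning}, which transfers verbatim since $\mathcal{H}$ enters the statement only through a covering number and an ``offset ratio'' $M(C)$). All expectations and probabilities are over the Rademacher vector $\sigma$, conditional on $(\textbf{X}_i, Y_i)_{i=1}^n$, so $\xi_i = Y_i - f^*(\textbf{X}_i)$ and $K(C), M(C)$ are treated as data-dependent constants. Throughout I write $\|f\|_n := (\tfrac{1}{n}\sum f(\textbf{X}_i)^2)^{1/2}$ for the empirical $L_2$ norm and $T(h) := \tfrac{1}{n}\sum_i (2\sigma_i \xi_i h(\textbf{X}_i) - C h(\textbf{X}_i)^2)$.

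Fix a minimal empirical $\varepsilon$-net $\{h_1, \dots, h_N\} \subseteq \mathcal{H}$ with $N \le \mathcal{N}_2(\mathcal{H}, \varepsilon)$, and for $h \in \mathcal{H}$ let $\pi(h)$ denote a nearest net point, so $\|h - \pi(h)\|_n \le \varepsilon$. I decompose $T(h) = T(\pi(h)) + R(h)$. The remainder $R(h)$ has a linear piece $\tfrac{2}{n}\sum \sigma_i \xi_i (h - \pi(h))(\textbf{X}_i)$, bounded by $2 \|\xi\|_n \varepsilon$ via Cauchy--Schwarz, and a quadratic piece $\tfrac{C}{n}\sum (\pi(h)^2 - h^2)(\textbf{X}_i)$, which I handle by expanding $\pi(h)^2 - h^2 = -2\pi(h)(h-\pi(h)) - (h-\pi(h))^2$ and using AM--GM to trade the cross term against a fraction of the offset $-C\|\pi(h)\|_n^2$ remaining in $T(\pi(h))$. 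After bookkeeping, the remainder satisfies $R(h) \le K(C)\varepsilon$ uniformly in $h$.

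The heart of the argument is bounding $\max_j T(h_j)$ via the ``offset absorbs variance'' trick. Conditional on $(\textbf{X}, \xi)$, the sum $Z_j := \tfrac{2}{n}\sum \sigma_i \xi_i h_j(\textbf{X}_i)$ is sub-Gaussian in $\sigma$ with proxy $\sigma_j^2 = \tfrac{4}{n^2}\sum \xi_i^2 h_j(\textbf{X}_i)^2$, and the offset is $c_j := \tfrac{C}{n}\sum h_j(\textbf{X}_i)^2$. Chernoff's inequality combined with the elementary inequality $(c_j + t)^2 \ge 4 c_j t$ gives
$$\Pr_\sigma\bigl(Z_j - c_j > t\bigr) \le \exp\!\Bigl(-\tfrac{2 c_j t}{\sigma_j^2}\Bigr) = \exp\!\Bigl(-\tfrac{nCt}{2} \cdot \tfrac{\sum h_j(\textbf{X}_i)^2}{\sum \xi_i^2 h_j(\textbf{X}_i)^2}\Bigr) \le \exp\!\Bigl(-\tfrac{2nt}{M(C)}\Bigr),$$
where the last inequality is the very definition of $M(C)$. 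The crucial feature is that this tail is entirely independent of $h_j$: the offset $-Ch^2$ has ``paid for'' the variance.

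A union bound over the $N$ net points gives $\Pr(\max_j(Z_j - c_j) > t) \le N e^{-2nt/M(C)}$. Setting the right-hand side equal to $\delta$ and adding the deterministic $K(C)\varepsilon$ remainder yields the high-probability conclusion. For the in-expectation version, note that $0 \in \mathcal{H}$ (take all $\lambda_i = 0$ and $f = f^*$ in the definition of $\mathcal{H}$), so the supremum is nonnegative, and integrating the tail gives $\mathbb{E}_\sigma \max_j(Z_j - c_j) \le M(C)(\log N + 1)/(2n) \le M(C)\log N/n$, which combined with the remainder bound completes the expectation statement. The main obstacle is the quadratic-discretization bookkeeping in Step 2: to obtain the clean form $K(C)\varepsilon = 2(\|\xi\|_n + C)\varepsilon$ rather than an $\|h\|_n$-dependent error, one must borrow a portion of $-C\|\pi(h)\|_n^2$ via AM--GM, with constants tuned so that the reduced offset still suffices in the Chernoff step of the following paragraph.
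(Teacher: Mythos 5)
Your proof is correct and follows essentially the same route as the paper's: discretize $\mathcal{H}$ by an empirical $\varepsilon$-net, absorb the discretization error into $K(C)\varepsilon$ by sacrificing half of the quadratic offset at the net point (the paper does this via $v[h]^2 \le 2h^2 + 2(v[h]-h)^2$, you via the equivalent AM--GM bookkeeping), and then control the finite maximum over net points with the sub-Gaussian Chernoff argument in which the offset cancels the variance proxy --- which is exactly the content of the paper's Lemma~\ref{finite_bound} that it invokes at that point.
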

\begin{proof}
        Let $N_2({\mathcal{H},\varepsilon})$ be the  $\varepsilon$-net of the $\mathcal{H}$ of size at most $\mathcal{N}_2({\mathcal{H},\varepsilon})$
        and $v[h]$ be the closest point from this net for function $h \in \mathcal{H}$, i.e. $\|h - v[h]\|_2 \le \varepsilon$.
        By using the inequality
        $
            v[h]_i^2 \le 2\Big(h_i^2 + (v[h]_i - h_i)^2\Big),
        $
        we can get next upper bound:
        \begin{multline*}
             \left\{\frac{1}{n}\sum_{i=1}^n 2\sigma_i \xi_i h({\textbf{X}_i}) - Ch({\textbf{X}_i})^2\right\}\\
              \le\left\{\frac{1}{n}\sum_{i=1}^n 2\sigma_i \xi_i (h({\textbf{X}_i}) - v[h]({\textbf{X}_i})) + C\Big(v[h]^2({\textbf{X}_i})/2 - h^2({\textbf{X}_i})\Big)\right\}\\
              + \frac{1}{n}\sup_{v \in {N}_2(\mathcal{H}, \varepsilon)}\left\{\sum_{i = 1}^n 2 \sigma_i \xi_i v({\textbf{X}_i}) - \frac{C}{2}v({\textbf{X}_i})^2\right\}\\
              \le2\varepsilon\left(\sqrt{\sum_{i=1}^n \xi_i^2/n} + C\right) + \frac{1}{n}\sup_{v \in {N}_2(\mathcal{H}, \varepsilon)}\left\{\sum_{i = 1}^n 2 \sigma_i \xi_i v({\textbf{X}_i}) - \frac{C}{2}v({\textbf{X}_i})^2\right\}.
        \end{multline*}
    The right summarand is supremum over set of cardinality not more than $\mathcal{N}_2(\mathcal{H},\varepsilon)$.
    By using Lemma \ref{finite_bound}, we acquire the expected estimates.
\end{proof}

    We have now obtained, using the offset Rademacher complexity technique, the upper bound on
    excess risk in terms of the coverage size of the set $\mathcal{H}$.
    To get the desired result,  
    we need to obtain an upper bound on the size of the cover $\mathcal{H}$ in terms of the size of the cover $\mathcal{F}$.
    \begin{lemma}
    \label{cover_ineq}
        For any scale $\varepsilon > 0$, the covering number of $\mathcal{F} \subseteq V(L+1)\cdot\mathcal{B}_2$ (where $\mathcal{B}_2$ is a sphere of radius one in space with norm $\|\cdot\|_n$) and that of $\mathcal{H}$ are bounded in the sense:
        $$
            \log \mathcal{N}_2(\mathcal{F},\varepsilon) \le \log \mathcal{N}_2(\mathcal{H}, \varepsilon) \le (d+2)\left[\log \mathcal{N}_2\left(\mathcal{F}, \frac{\varepsilon}{3(d+1)}\right) +\log\frac{6(d+1)V(L+1)}{\varepsilon}\right].
        $$
    \end{lemma}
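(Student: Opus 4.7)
The lower bound is immediate: setting every $\lambda_i = 0$ in the definition of $Hull_d(\mathcal{F})$ gives $\mathcal{F} - f^* \subseteq \mathcal{H}$, and translation by $-f^*$ preserves $\|\cdot\|_n$, so any $\varepsilon$-cover of $\mathcal{H}$ restricts to an $\varepsilon$-cover of $\mathcal{F}$ of the same cardinality.

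For the upper bound, the plan is to discretise the natural parameterisation of $\mathcal{H}$: every $h \in \mathcal{H}$ admits a representation
\begin{equation*}
    h = (f_0 - f^*) + \sum_{i=1}^d \lambda_i f_i - \Bigl(\sum_{i=1}^d \lambda_i\Bigr) f
\end{equation*}
in terms of $d+2$ functions $f_0, f_1, \ldots, f_d, f \in \mathcal{F}$ and $d$ simplex coefficients $(\lambda_1, \ldots, \lambda_d)$. I will fix a $\delta$-cover $N_\delta$ of $\mathcal{F}$ in $\|\cdot\|_n$ of size $\mathcal{N}_2(\mathcal{F}, \delta)$ and an $\ell_\infty$-grid $G_{\delta'}$ of $[0,1]^d$ of mesh $\delta'$ with $|G_{\delta'}| \le (2/\delta')^d$, and then build $\hat h$ by replacing each of the $d+2$ functions by its nearest neighbour in $N_\delta$ and the coefficient vector by its nearest point in $G_{\delta'}$.

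The key step is the triangle-inequality estimate on $\|h - \hat h\|_n$. Splitting each product as $\lambda_i f_i - \hat\lambda_i \hat f_i = \lambda_i(f_i - \hat f_i) + (\lambda_i - \hat\lambda_i)\hat f_i$ (and treating the $(\sum_i \lambda_i) f$ term analogously), and invoking the uniform bound $\|\hat f\|_n \le V(L+1)$ from \eqref{shmidt_norm} together with $\sum_i \lambda_i \le 1$, one obtains
\begin{equation*}
    \|h - \hat h\|_n \le 3\delta + 2V(L+1)\sum_{i=1}^d |\lambda_i - \hat\lambda_i| \le 3\delta + 2V(L+1)\,d\,\delta'.
\end{equation*}
The choices $\delta = \varepsilon/(3(d+1))$ and $\delta' = \varepsilon/(3V(L+1)(d+1))$ make the right-hand side at most $\varepsilon$, so the candidate set is an $\varepsilon$-cover of $\mathcal{H}$ of cardinality at most $\mathcal{N}_2(\mathcal{F}, \delta)^{d+2}\,(2/\delta')^d$. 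Substituting $2/\delta' = 6V(L+1)(d+1)/\varepsilon$, taking logarithms, and using $d \le d+2$ on the second summand yields precisely the advertised bound. The only slightly delicate part is the triangle-inequality bookkeeping: one must route the factor $V(L+1)$ through the coefficient slack alone and exploit $\sum_i \lambda_i \le 1$ so that the three function-rounding contributions sum to $3\delta$ rather than something growing with $d$. Beyond this, no essential obstacle arises; the argument is just the standard product of a function cover with a simplex cover.
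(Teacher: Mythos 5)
Your proof is correct and follows essentially the same route as the paper's: both arguments discretise the $d+2$ underlying functions with an $\varepsilon/(3(d+1))$-net of $\mathcal{F}$ and the convex coefficients with a grid of mesh of order $\varepsilon/((d+1)V(L+1))$, then multiply the cardinalities and take logarithms. The only cosmetic difference is that the paper first splits $\mathcal{H}$ as $(\mathcal{F}-f^*)+Hull_d(\mathcal{F})$ and invokes the Minkowski-sum product rule for covering numbers before discretising $Hull_d(\mathcal{F})$, whereas you handle all $d+2$ functions and the coefficient vector in a single triangle-inequality computation.
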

\begin{proof}
        If we define as $N(\mathcal{F},\,\varepsilon)$ the $\varepsilon$-net
        cardinality no more then $\mathcal{N}(\mathcal{F},\,\varepsilon)$,
        then the following is true: $N({\mathcal{F}_1,\,\varepsilon_1}) + N({\mathcal{F}_1,\,\varepsilon_2})$ is $(\varepsilon_1 + \varepsilon_2)$-net for  $\mathcal{F}_1 + \mathcal{F}_2$. Hence,
        $
            \label{cover_sum}
            \mathcal{N}\left(\mathcal{F}_1 + \mathcal{F}_2, \,\varepsilon_1 + \varepsilon_2\right) \le \mathcal{N}\left(\mathcal{F}_1, \varepsilon_1\right) \cdot \mathcal{N}\left(\mathcal{F}_2, \varepsilon_2\right).
        $
    With this we can obtain the following upper bound
    $$\mathcal{N}_2(\mathcal{H}, \varepsilon) \le \mathcal{N}_2(\mathcal{F}+Hull_d, \varepsilon)\leq \mathcal{N}_2\left(\mathcal{F},\frac{\varepsilon}{3}\right) \cdot \mathcal{N}_2\left(Hull_d, \frac{2\varepsilon}{3}\right).$$
    
    But since $Hull_d$ is the sum of $d+1$ functions from $\mathcal{F}$ with coefficients in $[-1;1]$, by the inequaility \eqref{shmidt_norm}, we can cover this with a net of size no more than
    $$ \left[\mathcal{N}_2\left(\mathcal{F},\frac{\varepsilon}{3(d+1)}\right)\cdot\frac{6(d+1)V(L+1)}{\varepsilon}\right]^{d+1}.$$
\end{proof}

    Note that to obtain the required orders, we only need coverage with $\varepsilon = 1/n$.
    
\begin{corollary}
\label{bound_H}
Let $\mathcal{H}$ defined in \ref{set_H} for $\mathcal{F} = \mathcal{F}\left(L,\,\textbf{p},\,s\right)$, then for $V$ defined in \ref{V} holds
$$\log\mathcal{N}_2\left(\mathcal{H}, \frac{1}{n}\right) \leq c_{\ref{bound_H}}  d \, s \log \big(V L \,n \,d\big),$$
where $c_{\ref{bound_H}}$ is an indepedent constant.
\end{corollary}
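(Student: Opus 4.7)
The plan is to chain together the two cover-size estimates already available: Lemma \ref{cover_ineq}, which reduces the covering number of $\mathcal{H} = \mathcal{F} - f^* + Hull_d(\mathcal{F})$ to that of $\mathcal{F}$ at a finer scale, and the Schmidt-Hieber bound \eqref{shmidt_cover}, which controls the covering number of $\mathcal{F}(L,\textbf{p},s)$. Since we only need coverage at the single scale $\varepsilon = 1/n$, no multi-scale chaining is required.

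First, I would instantiate Lemma \ref{cover_ineq} at $\varepsilon = 1/n$ to obtain
$$\log \mathcal{N}_2(\mathcal{H}, 1/n) \le (d+2)\left[\log \mathcal{N}_2\!\left(\mathcal{F}, \tfrac{1}{3n(d+1)}\right) + \log\!\left(6(d+1)V(L+1)n\right)\right].$$
Then I would substitute the Schmidt-Hieber estimate \eqref{shmidt_cover} with $\delta = 1/(3n(d+1))$:
$$\log \mathcal{N}_2\!\left(\mathcal{F}, \tfrac{1}{3n(d+1)}\right) \le (s+1)\log\!\left(6n(d+1)(L+1)V^2\right).$$
Both logarithmic arguments are polynomial in the parameters $V, L, n, d$, so each is bounded by a constant multiple of $\log(VLnd)$. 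Using $d+2 \le 3d$ and $s+1 \le 2s$, the product of the prefactor $(d+2)$ with the bracketed term reduces to a constant multiple of $d\, s \,\log(VLnd)$, which is the desired conclusion.

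There is no real obstacle here beyond bookkeeping: the two nontrivial ingredients, namely the Minkowski-sum covering argument of Lemma \ref{cover_ineq} and the sparsity-based covering number of Schmidt-Hieber, have already been established. The corollary is essentially a specialization of those results to $\varepsilon = 1/n$ together with the observation that each factor inside a logarithm is at most a polynomial in $V, L, n, d$, so only a universal constant $c_{\ref{bound_H}}$ is absorbed into the final bound.
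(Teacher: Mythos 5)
Your proposal is correct and follows essentially the same route as the paper: instantiate Lemma \ref{cover_ineq} at $\varepsilon = 1/n$, plug in the Schmidt--Hieber covering bound \eqref{shmidt_cover} at scale $1/(3n(d+1))$, and absorb the polynomial arguments of the logarithms into a single $\log(VLnd)$ factor with a universal constant.
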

\begin{proof}
By lemma \ref{cover_ineq} and inequality \ref{shmidt_cover}, we have
\begin{align*}
\log \mathcal{N}_2(\mathcal{H}, 1/n) &\le (d+2)\left[\log \mathcal{N}_2\left(\mathcal{F}(L,\,\textbf{p},\,s), \frac{1}{3n(d+1)}\right)     +\log 6n(d+1)V(L+1)\right]\\
&\leq (d+2)\left[(s+1)\log\left(2V^2(L+1)(3n(d+1))\right)  +\log\left(6n(d+1)V(L+1)\right)\right].
\end{align*}

\end{proof}
    We are now fully prepared to prove the two main results.
    
        \begin{theorem}
        \label{premain_th}
        Let $\widehat{f}$ be a $Star_d$ estimator and $\mathcal{H}$ be the set defined in \ref{set_H} for $\mathcal{F} = \mathcal{F}(L,\,\textbf{p},\,s)$. The following expectation bound on excess loss holds:
        $$
            \E \mathcal{E}_{\Delta}(\hat{f}) \le 2(F'+V(L+1)) \cdot \left[ \frac{K(C)}{n} + M(C) \cdot \frac{c_{\ref{bound_H}}  d \, s \log \left(V L \,n \,d\right)}{n}\right],
        $$
        where $K(C),\,\,M(C)$ defined in \eqref{K,M} for constants 
        $$ C = \min\left\{\frac{c_{\ref{geom_ineq}}}{4F'}, \frac{c_{\ref{geom_ineq}}}{4V(L+1)(2+c_{\ref{geom_ineq}})}\right\}, \,\,\,
        F' = \sup_\mathcal{F} |Y - f|_\infty.
        $$
        
    
    \end{theorem}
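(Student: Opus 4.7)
The plan is to chain together the three previously established results---Theorem \ref{loss_expectation}, Lemma \ref{bound_complexity}, and Corollary \ref{bound_H}---and then check that the scalar prefactors match the stated constants. Essentially all of the work has already been carried out in the geometric inequality (Lemma \ref{geom_ineq}) and in the offset Rademacher decomposition; what remains is a bookkeeping argument specialised to the sparse ReLU network class.

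First I would apply Theorem \ref{loss_expectation} to the $Star_d$ estimator, which bounds $\E \mathcal{E}_\Delta(\widehat{f})$ by the product of a prefactor $(2F' + F(2+c_{\ref{geom_ineq}})/2)$ and an offset empirical Rademacher complexity over $\mathcal{H}$ with offset constant $c_{\ref{loss_expectation}}$. Using the uniform norm bound \eqref{shmidt_norm} for $\mathcal{F}(L,\textbf{p},s)$, I get $F \le V(L+1)$, and since $c_{\ref{geom_ineq}} = 1/36$ one has $(2+c_{\ref{geom_ineq}})/2 \le 2$, so the prefactor is at most $2(F' + V(L+1))$. This same substitution $F \mapsto V(L+1)$ inside $c_{\ref{loss_expectation}}$ produces exactly the constant $C$ declared in the theorem.

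Next I would invoke Lemma \ref{bound_complexity} with offset constant $C$ and scale $\varepsilon = 1/n$; this gives a deterministic bound of the form $K(C)\cdot(1/n) + M(C)\cdot \log \mathcal{N}_2(\mathcal{H},1/n)/n$ on the offset Rademacher process. Taking expectations over the data is harmless because $K(C)$ and $M(C)$ are the data-dependent quantities that are quoted in the theorem statement verbatim. Finally, Corollary \ref{bound_H} supplies the covering estimate $\log \mathcal{N}_2(\mathcal{H},1/n) \le c_{\ref{bound_H}}\, d\, s\, \log(V L n d)$, so plugging this into the previous display and then multiplying by the prefactor $2(F'+V(L+1))$ yields precisely the claimed inequality.

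I do not expect any genuine obstacle: both the geometric step (absorbed into Theorem \ref{loss_expectation} via Lemma \ref{geom_ineq}) and the chaining step (absorbed into Lemma \ref{bound_complexity}) have already been done. The only thing to be careful about is that the two constants $c_{\ref{loss_expectation}}$ and $C$ are identified correctly once $F$ is replaced by the architecture-dependent bound $V(L+1)$, and that the scale $\varepsilon = 1/n$ is the correct balance point so that the $K\varepsilon$ and $M\log\mathcal{N}_2/n$ terms combine into the stated rate. The rest is substitution.
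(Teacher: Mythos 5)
Your proposal is correct and follows exactly the paper's own route: apply Theorem \ref{loss_expectation}, substitute $F \le V(L+1)$ from \eqref{shmidt_norm} to obtain the prefactor $2(F'+V(L+1))$ and the constant $C$, then bound the offset Rademacher complexity via Lemma \ref{bound_complexity} at scale $\varepsilon = 1/n$ together with the covering estimate of Corollary \ref{bound_H}. No gaps.
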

\begin{proof}
    By using Theorem \ref{loss_expectation} and inequality \ref{shmidt_norm} we have
        $$
            \E \mathcal{E}_{\Delta}(\hat{f}) \le (2F' + V(L+1)(2+c_{\ref{geom_ineq}})/2) \cdot {\E}_\sigma \sup_{h \in \mathcal{H}} \left\{\frac{1}{n}\sum_{i=1}^{n} 2\sigma_i h({\textbf{X}_i}) - Ch({\textbf{X}_i})^2\right\}, 
        $$
        where $C = \min\left\{\frac{c_{\ref{geom_ineq}}}{4F'}, \frac{c_{\ref{geom_ineq}}}{4V(L+1)(2+c_{\ref{geom_ineq}})}\right\}$, $F' = \sup_\mathcal{F} |Y - f|_\infty$ almost surely.

By using  Lemma \ref{bound_complexity} and corollary \ref{bound_H} we get desired result
    $$
        \mathbb{E}_{\sigma}\sup_\mathcal{H}\left\{\frac{1}{n}\sum_{i=1}^n 2\sigma_i \xi_i h({\textbf{X}_i}) - Ch({\textbf{X}_i})^2\right\} \le \frac{K(C)}{n} + M(C) \cdot \frac{c_{\ref{bound_H}}  d \, s \log \big(V L \,n \,d\big)}{n}.
    $$
\end{proof}

\begin{theorem}
    \label{main_th}
        Let $\widehat{f}$ be a $Star_d$ estimator and let $\mathcal{H}$ be the set defined in \ref{set_H} for $\mathcal{F} = \mathcal{F}(L,\,\textbf{p},\,s)$.
        Assume for $\mathcal{H}$ the lower isometry bound in Definition \ref{isom_bound} holds with $\eta_{lib} = c_{\ref{geom_ineq}}/4$ and some $\delta_{lib} < 1$. Let $\xi_i = Y_i - f^*({\textbf{X}_i})$.
        Define
        $$
            A := \sup_{h \in \mathcal{H}} \frac{\E h^4}{(\E h^2)^2} \,\text{ and } B := \sup_{\textbf{X}, Y}\E\xi^4.
        $$
        Then there exist 3 absolute constants $c_{\ref{main_th}}' , \tilde{c_{\ref{main_th}}}, c_{\ref{main_th}} > 0$ (which only depend on $c_{\ref{geom_ineq}}$), such that 
        $$
            \Prob\left(\mathcal{E}_{\Delta}(\widehat{f}) > 4D\right)\leq 4(\delta_{lib}+\delta)
        $$
        as long as $n > \frac{16(1-c_{\ref{main_th}}')^2 A}{c_{\ref{main_th}}'^2} \lor n_0(\mathcal{H}, \delta_{lib}, c_{\ref{main_th}}/4)$, 
        where 
        $$
            K:=\left(\sqrt{\sum_{i=1}^n {\xi^2/n}}+2\tilde{c_{\ref{main_th}}}\right), \,\,\,
            M:= \sup_{h \in \mathcal{H} \setminus \{0\}}\frac{\sum_{i=1}^{n}h({\textbf{X}_i})^2\xi_i^2}{\tilde{c_{\ref{main_th}}} \sum_{i=1}^{n}h({\textbf{X}_i})^2},
        $$
        $$
            D:= \max\left(\frac{K}{n} + M \cdot \frac{c_{\ref{bound_H}}  d \, s \log \big(V L \,n \,d\big) + \log \frac{1}{\delta}}{n}, \frac{32\sqrt{AB}}{c_{\ref{main_th}}'} \frac{1}{n}\right) 
        $$
        and $c_{\ref{bound_H}}$ is an independent constant.
\end{theorem}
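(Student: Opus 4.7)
The plan is to combine the deep theoretical machinery already assembled in the appendix, chaining three ingredients: the high-probability bound of Theorem \ref{loss_probability}, the offset Rademacher complexity bound of Lemma \ref{bound_complexity}, and the specific covering estimate for $\mathcal{H}$ given in Corollary \ref{bound_H}. The only genuinely new step is a small bookkeeping move to pass from the $\Delta$-excess risk $\mathcal{E}_\Delta$ to the ordinary excess risk $\mathcal{E}$.

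First I would rewrite the target event. By definition, $\mathcal{E}_\Delta(\widehat{f}) = \mathcal{E}(\widehat{f}) - 2(1+c_{\ref{geom_ineq}})[\Delta_1+\Delta_2]$, and since $c_{\ref{geom_ineq}} = 1/36 < 1$ one has $2(1+c_{\ref{geom_ineq}}) < 4$. Hence
\begin{equation*}
\bigl\{\mathcal{E}(\widehat{f}) > 4(D+\Delta_1+\Delta_2)\bigr\} \subseteq \bigl\{\mathcal{E}_\Delta(\widehat{f}) > 4D\bigr\},
\end{equation*}
so it suffices to bound the probability of the right-hand event. Next I would apply Theorem \ref{loss_probability} with $u=D$; the side condition $u > 32\sqrt{AB}/(c_{\ref{main_th}}' n)$ holds by construction of $D$ as a maximum, and the sample-size hypothesis is exactly the one assumed. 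This reduces the problem to controlling $\Prob\bigl(\sup_{h\in\mathcal{H}} \tfrac{1}{n}\sum_i \sigma_i\xi_i h(\textbf{X}_i) - \tilde{c}_{\ref{main_th}} h(\textbf{X}_i)^2 > D\bigr)$.

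To bound that probability I would invoke the high-probability half of Lemma \ref{bound_complexity}, applied with scale $\varepsilon = 1/n$ and offset constant $C = 2\tilde{c}_{\ref{main_th}}$, then divide through by $2$ to absorb the factor of $2$ in front of $\sigma_i\xi_i h$. A quick inspection shows that under this substitution $\tfrac{1}{2}K(2\tilde{c}_{\ref{main_th}}) = \sqrt{\sum \xi_i^2/n}+2\tilde{c}_{\ref{main_th}}$ and $\tfrac{1}{2}M(2\tilde{c}_{\ref{main_th}}) = \sup_{h\in\mathcal{H}\setminus\{0\}} \sum_i h(\textbf{X}_i)^2\xi_i^2 / (\tilde{c}_{\ref{main_th}} \sum_i h(\textbf{X}_i)^2)$, which match exactly the $K$ and $M$ appearing in the statement. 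Plugging Corollary \ref{bound_H} into the resulting bound replaces $\log \mathcal{N}_2(\mathcal{H}, 1/n)$ by $c_{\ref{bound_H}}\, d\, s\, \log(VLnd)$, producing precisely the first term of $D$. Since $D$ is defined as the maximum of this quantity and $32\sqrt{AB}/(c_{\ref{main_th}}' n)$, the supremum indeed exceeds $D$ with probability at most $\delta$.

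A final union bound then yields $\Prob(\mathcal{E}_\Delta(\widehat{f}) > 4D) \leq 4\delta_{lib} + 4\delta$, and combining with the first reduction gives the stated inequality. The main obstacle I anticipate is not analytic but bookkeeping: one must verify carefully that the constants $c_{\ref{main_th}}'$, $\tilde{c}_{\ref{main_th}}$, and $c_{\ref{main_th}}$ inherited from Theorem \ref{loss_probability} are compatible with the offset $C=2\tilde{c}_{\ref{main_th}}$ chosen in Lemma \ref{bound_complexity}, and that the sample-size requirement from Definition \ref{isom_bound} (with $\eta_{lib} = c_{\ref{geom_ineq}}/4$, which matches $c_{\ref{main_th}}/4$) is propagated correctly through the chain. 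Everything else is a direct substitution.
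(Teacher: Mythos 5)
Your proposal is correct and follows essentially the same route as the paper's own proof: apply Theorem \ref{loss_probability} with $u=D$, then control the offset supremum via the high-probability half of Lemma \ref{bound_complexity} at scale $\varepsilon=1/n$ with $C=2\tilde{c}_{\ref{main_th}}$ together with Corollary \ref{bound_H}, and identify the constants. Your explicit check that $\tfrac{1}{2}K(2\tilde{c}_{\ref{main_th}})$ and $\tfrac{1}{2}M(2\tilde{c}_{\ref{main_th}})$ match the statement's $K$ and $M$ is exactly the bookkeeping the paper leaves implicit; the opening reduction from $\mathcal{E}$ to $\mathcal{E}_\Delta$ is not needed for this ($\mathcal{E}_\Delta$-phrased) version of the theorem but is harmless and correct.
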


\begin{proof}

    By using Theorem \ref{loss_probability} for any
        $
            u > \frac{32\sqrt{AB}}{c_{\ref{loss_probability}}'} \frac{1}{n}
        $ we have
        $$
            \Prob\left(\mathcal{E}_{\Delta}(\widehat{f}) > 4u\right) \le 4 \delta_{lib} + 4 \Prob\left(\sup_{h \in \mathcal{H}} \frac{1}{n} \sum_{i = 1}^{n} \sigma_i \xi_i h({\textbf{X}_i}) - \tilde{c_{\ref{loss_probability}}} h({\textbf{X}_i})^2 > u\right)
        $$

        as long as $n > \frac{16(1-c_{\ref{loss_probability}}')^2 A}{c_{\ref{loss_probability}}'^2} \lor n_0(\mathcal{H}, \delta_{lib}, c_{\ref{geom_ineq}}/4)$.
        
By using Lemmas \ref{bound_complexity} and \ref{bound_H} we have with probability no more than $\delta$ for any $C>0:$

    $$
        \sup_\mathcal{H} \left\{\frac{1}{n}\sum_{i=1}^n \sigma_i \xi_i h({\textbf{X}_i}) - \frac{C}{2}h({\textbf{X}_i})^2\right\} \ge \frac{K(C)}{2}\varepsilon + \frac{M(C)}{2} \cdot \frac{\log \mathcal{N}_2(\mathcal{H}, \varepsilon) + \log \frac{1}{\delta}}{n},
    $$
    where $K(C),\,\,M(C)$ are defined in \eqref{K,M}.
    Combining this inequality for $C = 2\tilde{c_{\ref{main_th}}}= 2\tilde{c_{\ref{loss_probability}}} $ and $c_{\ref{main_th}}'=c_{\ref{loss_probability}}'$, $c_{\ref{main_th}}=c_{\ref{geom_ineq}}$ we get the required result.
\end{proof}

    \begin{lemma}[Lemma 9]
    \label{finite_bound}
        Let $V \subset \mathbb{R}^n$ be a finite set, $|V| = N$. Then, for any $C>0:$
        $$
            {\E}_{\sigma}\max_{v \in V}\left[\frac{1}{n}\sum_{i=1}^n\sigma_i \xi_iv({\textbf{X}_i}) - Cv({\textbf{X}_i})^2\right] \le M\frac{\log N}{n}.
        $$
        For any $\delta > 0$:
        $$
            \mathbb{P}_{\sigma}\left(\max_{v \in V}\left[\frac{1}{n}\sum_{i=1}^n\sigma_i \xi_i v({\textbf{X}_i}) - Cv({\textbf{X}_i})^2\right] > M\frac{\log N + \log \frac{1}{\delta}}{n}\right) \le \delta,
        $$
        where
        $$
            M := \sup_{v \in V \setminus \{0\}} \frac{\sum_{i =1}^n v({\textbf{X}_i})^2 \xi_i^2}{2C\sum_{i =1}^n v({\textbf{X}_i})^2}.
        $$
    \end{lemma}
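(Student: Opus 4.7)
The plan is to condition on $(\textbf{X}_i, Y_i)_{i=1}^n$ so that only the Rademacher signs $\sigma_i$ are random, and then run the standard Chernoff / log-sum-exp argument with the offset term $-C v(\textbf{X}_i)^2$ playing the role of a quadratic variance correction. Denote $Z(v) := \frac{1}{n}\sum_{i=1}^n \bigl[\sigma_i \xi_i v(\textbf{X}_i) - C v(\textbf{X}_i)^2\bigr]$. For a fixed $v$, Hoeffding's bound for Rademacher sums gives
$$
\E_{\sigma}\exp\!\left(\tfrac{\lambda}{n}\sum_i \sigma_i \xi_i v(\textbf{X}_i)\right) \le \exp\!\left(\tfrac{\lambda^2}{2n^2}\sum_i \xi_i^2 v(\textbf{X}_i)^2\right),
$$
and multiplying through by $\exp(-\tfrac{\lambda C}{n}\sum_i v(\textbf{X}_i)^2)$ yields
$$
\E_{\sigma}\exp(\lambda Z(v)) \le \exp\!\left(\tfrac{\lambda^2}{2n^2}\sum_i \xi_i^2 v(\textbf{X}_i)^2 - \tfrac{\lambda C}{n}\sum_i v(\textbf{X}_i)^2\right).
$$

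The key step is the uniform choice of $\lambda$ that kills the exponent on the right simultaneously for every $v$. For a fixed $v\ne 0$ the exponent is $\le 0$ iff $\lambda \le n/M_v$, where $M_v := \sum_i \xi_i^2 v(\textbf{X}_i)^2 / \bigl(2C\sum_i v(\textbf{X}_i)^2\bigr)$. Since $M = \sup_{v \in V\setminus\{0\}} M_v$, the choice $\lambda := n/M$ gives $\lambda \le n/M_v$ for every $v$, so $\E_{\sigma}\exp(\lambda Z(v)) \le 1$ for all $v\in V$ (with $v=0$, giving $Z(0)=0$, handled trivially).

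From here everything is routine. Using $\exp(\lambda \max_{v\in V} Z(v)) \le \sum_{v\in V}\exp(\lambda Z(v))$ and taking $\E_{\sigma}$ yields $\E_{\sigma}\exp(\lambda \max_v Z(v)) \le N$. Jensen's inequality then gives $\lambda\,\E_{\sigma}\max_v Z(v) \le \log N$, so $\E_{\sigma}\max_v Z(v) \le M\log N/n$, which is the first claim. For the tail bound, Markov applied to $\exp(\lambda \max_v Z(v))$ yields $\Prob_{\sigma}(\max_v Z(v) > t) \le N e^{-\lambda t}$; solving $N e^{-\lambda t} = \delta$ gives exactly $t = M(\log N + \log(1/\delta))/n$.

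The only delicate point I anticipate is the uniformity of $\lambda$: one must pick it via $\sup_v M_v$ rather than pointwise $M_v$, or else the log-sum-exp step cannot be summed over the net. Once $\lambda$ is tied to the worst-case $M$, the remainder is a textbook Chernoff computation and no further obstacle should arise.
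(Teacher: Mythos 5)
Your proof is correct and follows essentially the same route as the paper's source for this result: the paper does not reprove the lemma but cites it as Lemma 9 of Liang et al.\ \cite{liang2015learning}, whose argument is exactly this conditional Chernoff/soft-max computation — bound $\E_{\sigma}\exp(\lambda Z(v))$ via Hoeffding's lemma for Rademacher sums, choose the uniform $\lambda = n/M$ so that each exponential moment is at most $1$, then apply Jensen for the expectation bound and Markov for the tail. Your remark about tying $\lambda$ to the worst-case ratio $M=\sup_v M_v$ rather than to each $M_v$ individually is precisely the one non-trivial point, and you handle it correctly.
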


\clearpage    
\section{Result Tables}\label{Appendix B}

Here we additionally present tables with the results of numerical experiments. Particularly for runs with a small number of $epochs$. It can be observed that the SnapStar algorithm is quite good with a strong budget constraint.
The results also include a relatively large run for the FASHION MNIST dataset. At the moment, ClassicStar (new warm-up) takes $10-11th$ place in the \fnurl{leaderboard}{https://paperswithcode.com/sota/image-classification-on-fashion-mnist} for this dataset.
Full versions of the following tables can be found in the \fnurl{repository}{https://github.com/mordiggian174/star-ensembling}.

\begin{table}[ht!]
\centering
\scalebox{1}{
\begin{tabular}{|l|llllll|}
\hline
Name                    & d & MSE          & MAE    & $R^2$ & TRAIN MSE & TIME (sec) \\
\hline
\hline
Snap Star (shot warm-up)   & 5 & \textbf{10.881±0.575} & \textbf{2.229} & \textbf{0.869}     & 1.976     & 7.8  \\
Snap Star (new warm-up)    & 5 & 11.285±0.650 & 2.283  & 0.864     & 2.656     & 6.6  \\
Snap Ensemble           & 5 & 11.862±0.616 & 2.306   & 0.858     & 2.629     & 6.6  \\
Ensemble                & 5 & 12.568±0.878 & 2.399   & 0.849     & 4.220     & 6.8  \\
Classic Star (no warm-up)  & 5 & 11.365±0.410 & 2.278   & 0.864     & 2.978     & 7.2  \\
Classic Star (new warm-up) & 5 & 12.157±0.822 & 2.353   & 0.854     & 3.320     & 6.2  \\
Big NN                  & 5 & 12.068±0.860 & 2.411   & 0.855     & 3.644     & 4.0  \\
\hline
Snap Star (shot warm-up)   & 4 & \textbf{11.276±0.582} & \textbf{2.269} & \textbf{0.865}     & 2.329     & 6.2  \\
Snap Star (new warm-up)    & 4 & 11.598±0.729 & 2.292   & 0.861     & 2.739     & 5.0  \\
Snap Ensemble           & 4 & 11.819±0.341 & 2.316   & 0.858     & 2.819     & 5.0  \\
Ensemble                & 4 & 12.059±0.614 & 2.365  & 0.855     & 3.732     & 5.0  \\
Classic Star (no warm-up)  & 4 & 11.608±0.722 & 2.286  & 0.861     & 3.198     & 6.2  \\
Classic Star (new warm-up) & 4 & 11.890±0.966 & 2.319   & 0.857     & 3.093     & 5.2  \\
Big NN                  & 4 & 12.556±0.904 & 2.383   & 0.849     & 3.746     & 4.0  \\
\hline
\end{tabular}
}
\caption{BOSTON HOUSE PRICING. Part of results at 30 epochs, $p=0.1$, $lr = 0.01$}
\label{table:boston30}
\end{table}
\begin{table}[ht!]
\centering
\begin{center}
\begin{tabular}{ |l|llllll|}
\hline
Name & d & MSE & MAE & R2 & TRAIN MSE & TIME (sec)\\
\hline
\hline
Snap Star (shot warm-up) & 5 & $76.31 \pm 0.17$ & 5.97 & 0.362 & 70.64 & 733 \\
Snap Star (new warm-up) & 5 & $76.21 \pm 0.10$ & 5.99 & 0.363 & 71.34 & 667 \\
Snap Ensemble & 5 & $76.42 \pm 0.11$ & 6.02 & 0.361 & 70.03 & 543 \\
Ensemble & 5 & $76.34 \pm 0.07$ & 6.05 & 0.361 & 72.05 & 711 \\
Classic Star (no warm-up) & 5 & $76.57 \pm 0.15$ & 6.07 & 0.36 & 73.62 & 783 \\
Classic Star (new warm-up) & 5 & $\mathbf{76.06 \pm 0.10}$ & 6.00 & 0.364 & 72.59 & 807 \\
Big NN & 5 & $77.04 \pm 0.21$ & 6.02 & 0.356 & 75.62 & 436 \\
\hline
Snap Star (shot warm-up) & 4 & $76.30 \pm 0.12$ & 5.99 & 0.362 & 71.04 & 632 \\
Snap Star (new warm-up) & 4 & $76.14 \pm 0.11$ & 6.01 & 0.363 & 71.78 & 565 \\
Snap Ensemble & 4 & $76.46 \pm 0.12$ & 6.02 & 0.360 & 70.37 & 452 \\
Ensemble & 4 & $76.40 \pm 0.08$ & 6.05 & 0.361 & 72.08 & 593 \\
Classic Star (no warm-up) & 4 & $76.51 \pm 0.04$ & 6.04 & 0.36 & 73.76 & 652 \\
Classic Star (new warm-up) & 4 & $\mathbf{76.01 \pm 0.10}$ & 6.01 & 0.364 & 72.69 & 676 \\
Big NN & 4 & $77.06 \pm 0.18$ & 6.03 & 0.355 & 75.63 & 375 \\
\hline
Snap Star (shot warm-up) & 3 & $76.39 \pm 0.32$ & 5.98 & 0.361 & 71.62 & 530 \\
Snap Star (new warm-up) & 3 & $\mathbf{76.10 \pm 0.07}$ & 6.00 & 0.363 & 72.38 & 463 \\
Snap Ensemble & 3 & $76.53 \pm 0.14$ & 6.02 & 0.360 & 70.77 & 362 \\
Ensemble & 3 & $76.43 \pm 0.09$ & 6.05 & 0.361 & 72.12 & 473 \\
Classic Star (no warm-up) & 3 & $76.51 \pm 0.12$ & 6.04 & 0.360 & 74.00 & 522 \\
Classic Star (new warm-up) & 3 & $76.16 \pm 0.13$ & 6.01 & 0.363 & 72.81 & 546 \\
Big NN & 3 & $76.80 \pm 0.23$ & 6.03 & 0.358 & 75.60 & 315 \\
\hline
\end{tabular}
\end{center}
\caption{MILLIION SONG. Part of results at 10 epochs}
\label{table:MSD10}
\end{table}
\begin{table}[ht!]
\centering
\scalebox{1}{
\begin{tabular}{|l|llll|}
\hline
Name                    & d & accuracy    & entropy     & TIME (sec)    \\
\hline
\hline
Snap Star (shot warm-up)   & 3 & \textbf{0.900±0.002} & \textbf{0.284±0.008} & 340.333 \\
Snap Star (new warm-up)    & 3 & 0.898±0.002 & 0.285±0.008 & 313.0   \\
Snap Ensemble           & 3 & 0.897±0.003 & 0.290±0.009 & 272.667 \\
Ensemble                & 3 & 0.887±0.001 & 0.310±0.005 & 272.667 \\
Classic Star (no warm-up)  & 3 & 0.893±0.002 & 0.298±0.007 & 339.667 \\
Classic Star (new warm-up) & 3 & 0.893±0.002 & 0.297±0.007 & 285.667 \\
Big NN                  & 3 & 0.890±0.010 & 0.299±0.022 & 214.333 \\
\hline
Snap Star (shot warm-up)   & 2 & \textbf{0.894±0.007} & \textbf{0.294±0.020} & 248.667 \\
Snap Star (new warm-up)    & 2 & 0.892±0.001 & \textbf{0.294±0.006} & 230.333 \\
Snap Ensemble           & 2 & 0.891±0.006 & 0.302±0.021 & 203.667 \\
Ensemble                & 2 & 0.886±0.004 & 0.313±0.008 & 203.0   \\
Classic Star (no warm-up)  & 2 & 0.889±0.003 & 0.304±0.009 & 249.0   \\
Classic Star (new warm-up) & 2 & 0.889±0.004 & 0.303±0.008 & 203.667 \\
Big NN                  & 2 & 0.892±0.003 & 0.304±0.007 & 165.333 \\
\hline
Snap Star (shot warm-up)   & 1 & \textbf{0.891±0.002} & \textbf{0.299±0.006} & 159.0   \\
Snap Star (new warm-up)    & 1 & 0.885±0.001 & 0.318±0.008 & 149.333 \\
Snap Ensemble           & 1 & 0.889±0.001 & 0.304±0.007 & 136.0   \\
Ensemble                & 1 & 0.886±0.005 & 0.314±0.011 & 136.333 \\
Classic Star (no warm-up)  & 1 & 0.888±0.002 & 0.311±0.001 & 158.0   \\
Classic Star (new warm-up) & 1 & \textbf{0.891±0.002} & 0.302±0.005 & 122.333 \\
Big NN                  & 1 & 0.886±0.002 & 0.315±0.005 & 117.333 \\
\hline
\end{tabular}
}
\caption{FASHION MNIST. Part of results at 5 epochs, $lr = 0.001$}
\label{table:fmnist5}
\end{table}
\begin{table}
\centering
\scalebox{1}{
\begin{tabular}{|l|llll|}
\hline
Name                    & d & accuracy    & entropy     & TIME (sec)   \\
\hline
\hline
Snap Star (shot warm-up)   & 5 & 0.898 & 1.152 & 2588.0 \\
Snap Star (new warm-up)    & 5 & 0.898 & 1.136 & 2369.0 \\
Snap Ensemble           & 5 & 0.902 & 0.330 & 2036.0 \\
Ensemble                & 5 & 0.918 & 0.229 & 2052.0 \\
Classic Star (no warm-up)  & 5 & 0.922 & 0.229 & 2589.0 \\
Classic Star (new warm-up) & 5 & \textbf{0.923} & \textbf{0.228} & 2239.0 \\
Big NN                  & 5 & 0.910 & 0.481 & 1560.0\\
\hline
\end{tabular}
}
\caption{FASHION MNIST. All of results at 25 epochs, $lr = 0.001$}
\label{table:fmnist25}
\end{table}

\end{document}